%% file: main.tex
\documentclass{article} 
\usepackage[final]{template}
\usepackage[utf8]{inputenc} 
\usepackage[T1]{fontenc}    
\usepackage{url}
\usepackage{amsthm}
\usepackage{amsmath}
\usepackage{enumitem}
\setenumerate{left=12pt,itemsep=0pt,topsep=0pt}
\setitemize{left=12pt,itemsep=0pt,topsep=0pt}

\NewDocumentCommand{\incplt}{O{\columnwidth}m}{%
  \begin{center}
    \adjustbox{center}{\adjustbox{width=#1+10pt}{\includegraphics[width=#1]{./plots/output/#2.pdf}}}
  \end{center}
}

\usepackage{aliascnt} 

\newaliascnt{lemma}{theorem}
\newtheorem{lemma}[lemma]{Lemma}
\aliascntresetthe{lemma}
\newaliascnt{proposition}{theorem}
\newtheorem{proposition}[proposition]{Proposition}
\aliascntresetthe{proposition}

\newcommand{\figref}[2]{Figure~\hyperref[#1]{\ref{#1} (#2)}}

\makeatletter
\renewcommand{\@makefnmark}{\hbox{\smash{\textsuperscript{\@thefnmark}}}}
\makeatother

\definecolor{lightgray}{gray}{0.95}
\definecolor{green}{HTML}{59BB2B}
\usepackage{titlesec}
\titlespacing*{\paragraph}{0pt}{0.25ex}{2ex}

\input{macros}

\usepackage{hyperref}
\usepackage[capitalize,nameinlink,noabbrev]{cleveref}
\definecolor{hanblue}{rgb}{0.27, 0.42, 0.81}
\hypersetup{
    colorlinks=true,
    linkcolor=hanblue,
    urlcolor=hanblue,
    citecolor=hanblue,
    anchorcolor=hanblue}
\crefname{theorem}{Theorem}{Theorems}
\crefname{lemma}{Lemma}{Lemmas}
\crefname{proposition}{Proposition}{Propositions}
\crefname{corollary}{Corollary}{Corollaries}
\crefname{definition}{Definition}{Definitions}
\crefname{assumption}{Assumption}{Assumptions}

\RenewDocumentCommand{\paragraph}{m}{\textbf{#1}\,\,\,}

\title{Aligning Language Models from User Interactions}

\author{%
Thomas Kleine Buening\textsuperscript{$1$} \; Jonas Hübotter\textsuperscript{$1$} \; \bfseries Barna P{\'a}sztor\textsuperscript{$1$} \; \\[1pt]
\bfseries Idan Shenfeld\textsuperscript{$2$} \; \bfseries Giorgia Ramponi\textsuperscript{$3$} \; Andreas Krause\textsuperscript{$1$} \\[3pt]
\textsuperscript{$1$}ETH Zurich \; \textsuperscript{$2$}MIT \; \textsuperscript{$3$}University of Zurich
}

\begin{document}
\maketitle

\begin{abstract}
\looseness=-1
Multi-turn user interactions are among the most abundant data produced by language models, yet we lack effective methods to learn from them. While typically discarded, these interactions often contain useful information: follow-up user messages may indicate that a response was incorrect, failed to follow an instruction, or did not align with the user's preferences. Importantly, language models are already able to make use of this information in context. After observing a user's follow-up, the same model is often able to revise its behavior. We leverage this ability to propose a principled and scalable method for learning directly from user interactions through self-distillation. By conditioning the model on the user's follow-up message and comparing the resulting token distribution with the original policy, we obtain a target for updating the policy that captures how the model's behavior changes in hindsight. We then distill this hindsight distribution back into the current policy. Remarkably, we show that training on real-world user conversations from WildChat improves language models across standard alignment and instruction-following benchmarks, without regressing other capabilities. The same mechanism enables personalization, allowing models to continually adapt to individual users through interaction without explicit feedback. Our results demonstrate that raw user interactions that arise naturally during deployment enable alignment, personalization, and continual adaptation.
\end{abstract}

\input{sections/intro}
\input{sections/setting}

\input{sections/algorithm}

\input{sections/experiments}

\input{sections/related_work}
\input{sections/discussion}

\input{sections/acknowledgements}

\bibliography{ref}
\bibliographystyle{template}

\newpage
\input{sections/appendix}


\end{document}

%% file: macros.tex
\usepackage[table]{xcolor}
\usepackage{graphicx}
\usepackage{makecell}
\usepackage{amsfonts}
\usepackage{booktabs}
\usepackage{subcaption}
\usepackage{tabularx}
\usepackage{nicefrac}
\usepackage{algorithm}
\usepackage{algorithmic}
\usepackage{wrapfig}
\usepackage[most]{tcolorbox}
\usepackage[bottom]{footmisc}

\usepackage{titlesec}
\titleformat*{\section}{\Large\bfseries}
\titleformat*{\subsection}{\large\bfseries}
\titleformat*{\subsubsection}{\large\bfseries}
\titleformat*{\paragraph}{\large\bfseries}
\titleformat*{\subparagraph}{\large\bfseries}

\definecolor{lightgray}{gray}{0.9}
\definecolor{lightred}{rgb}{1,0.8,0.8}

\newcommand{\E}{\mathbb{E}}

\newcommand{\p}{p}

\newcommand{\defeq}{:=}

%% file: sections/intro.tex
\section{Introduction}


In modern language models, inference has overtaken training as the dominant consumer of compute, with models serving massive volumes of user queries every day. Yet the information revealed through these interactions is typically discarded and does not contribute to improving the model itself, representing a significant missed opportunity.  
At scale, users engage in extended conversations, refining prompts, requesting revisions, and responding directly to model outputs. These interactions are rich with implicit learning signals: follow-up messages may indicate that a response was incorrect, failed to follow an instruction, or did not align with the user’s preferences~\citep{don2024naturally}. For example, a user may report an error after executing generated code, point out that a required format was not followed, or ask for a response to be rewritten in a different style or tone. Such signals arise organically during normal use and reflect how model outputs are received and acted upon during deployment. Finding ways to leverage this data source can open the door to continual learning from deployment at an unprecedented scale.

Despite their scale and richness, we still lack effective methods to learn directly from user interactions. Unlike standard 
datasets~\citep{ouyang2022training, chung2024scaling}, user interactions do not come with explicit labels, expert demonstrations, preference comparisons, or rewards.  Instead, feedback is implicit and expressed through natural language responses whose meaning depends on the surrounding interaction context. As a result, it is unclear how to train directly on real-world conversations in a principled manner. 

At the same time, we observe that language models already demonstrate the ability to leverage this interactive information in context, a capability known as in-context learning~\citep{brown2020language, wei2022chain}. In multi-turn conversations, models often revise their behavior effectively after observing a user’s follow-up. 
When a user reports an error in generated code, the model can frequently infer which part of its previous response was incorrect and propose a fix. When a user points out that a required format was not followed, the model is able to correct the structure in a revised answer. When a user expresses dissatisfaction with tone or style, the model can adapt its response to better match the user’s preferences. In these cases, conditioning on the user’s follow-up message leads to responses that are more aligned with the task and the user’s intent. 

\looseness=-1
These observations suggest a simple but powerful perspective: having seen the user’s follow-up message, the model’s behavior is often better aligned than before. The user interaction reveals information that the model can already interpret and act upon, but only after the fact. \emph{In hindsight}. 
Crucially, this improvement arises without additional supervision and reflects how the model’s behavior changes when it has access to the user's follow-up. 
This suggests that a model’s in-context learning ability can be used as a lever for learning directly from user interactions in a principled way.  

Based on this idea, we introduce a simple and scalable method for learning directly from user interactions by comparing a model’s original behavior to what it would have done in hindsight. Concretely, after observing a user’s follow-up message, we reprompt the same model with this additional context and obtain a hindsight token distribution that reflects how the model would respond if it had access to the information revealed by the user. By comparing the original policy to this hindsight policy at every token of the original generation, we obtain a comparative learning signal that identifies how the model’s behavior should change. We then distill this signal back into the original policy using only the observed interaction. In other words, we distill the model into itself.  
\looseness=-1

Building on recent work on self-distillation~\citep{hubotter2026reinforcement},
we refer to this approach as Self-Distillation Policy Optimization  (SDPO) from User Interactions. 
We show that this approach (illustrated in \cref{fig:modelexample}) is simple and scalable, and enables language models to improve from raw, real-world user conversations without explicit supervision, reward models, or preference labels. Remarkably, when applied to real-world user interactions from WildChat~\citep{zhao2024wildchat}, SDPO from User Interactions improves alignment and instruction-following performance across standard benchmarks without degrading other capabilities. We also demonstrate that the same self-distillation mechanism naturally supports personalization and continual adaptation, allowing models to adapt to individual users purely through continued interaction.


\begin{figure*}
    \centering
    \includegraphics[width=\textwidth]{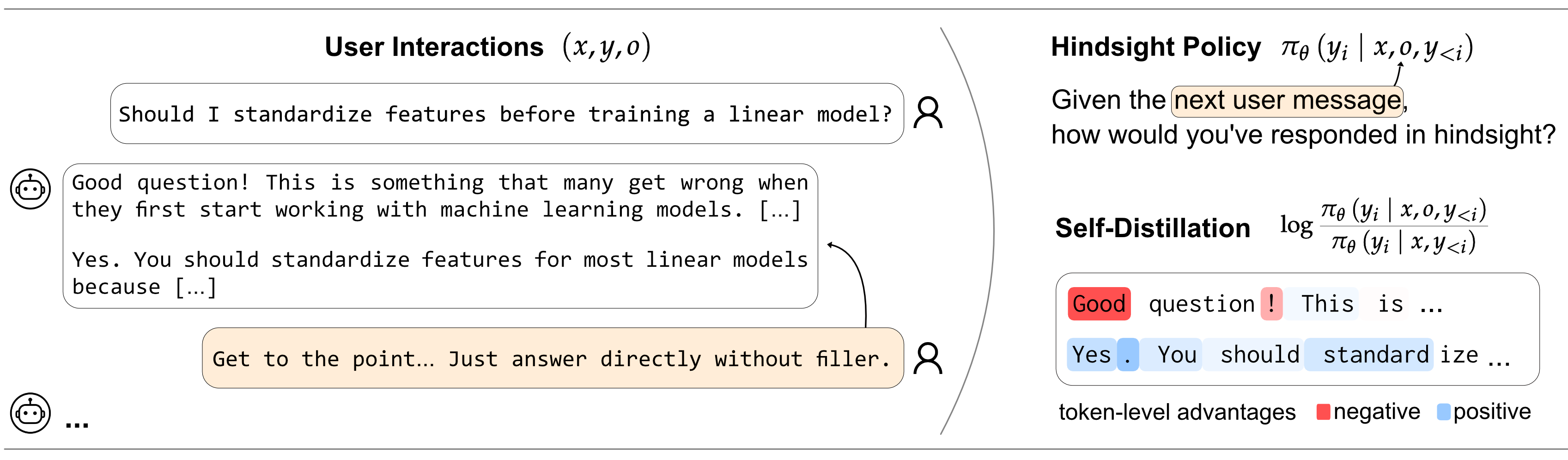}
    \caption{\textbf{Direct Learning from User Interactions via Self-Distillation.} From multi-turn user conversations, we obtain several interactions $(x, y, o)$ that consist of the conversation history $x$, the model's response $y$, and the subsequent user message $o$. By conditioning on the user's follow-up, we form the \emph{hindsight policy} and compare it to the original policy, producing token-level advantages that reinforce or penalize parts of the model's original response. In this example, the user's follow-up requests a more direct answer, leading to penalizing filler tokens and reinforcing the answer.} 
    \label{fig:modelexample}
    \vspace{-0.1cm}
\end{figure*}

%% file: sections/setting.tex

\section{Problem Formulation}\label{section:problem_formulation}


\looseness=-1
The interaction between a language model and a user consists of a sequence of alternating assistant messages $y_t$ and user messages $o_t$. At the $t$-th turn, the language model observes the conversation history $x_t = (o_0, y_1, o_1, \dots, o_{t-1})$,
and generates a response $y_t \sim \pi_\theta(\cdot \mid x_t)$.\footnote{In practice, the assistant may condition on a representation of $x_t$, such as a sliding context window, a learned embedding, or a summary. For simplicity, we treat $x_t$ as the full interaction history here.} In turn, the user responds with~$o_t$, assuming the conversation does not terminate. 

In case of a fresh conversation initiated by the user, the first interaction reduces to the initial prompt~$o_0$, the assistant's answer $y_1$, and the user's follow-up $o_1$.  
We define the triple $(x_t, y_t, o_t)$ as a single interaction.
Consequently, a user conversation $(o_0, y_1, \dots, y_t, o_t)$ yields $t$ interactions $(x_1, y_1, o_1), \dots, (x_t, y_t, o_t)$, which overlap in their histories, since $x_{t+1}$ contains all of $x_t$ plus the most recent interaction. For convenience, we use $(x, y, o)$ to denote a generic single user interaction.

Despite the ubiquity of conversational data of this form, it remains unclear how to leverage user interactions directly. One could attempt to introduce auxiliary mechanisms, such as semantic categorization of conversations~\citep{shi2024wildfeedback, gunjal2025rubrics}, explicit preference annotation~\citep{stephan2024rlvf, lee2024reinforcement}, or other post-hoc extracted rewards~\citep{wang2025text2grad, urcelay2025words}, but even then it is not obvious how to construct such signals reliably from raw interaction data alone. Any such approach would require additional modeling assumptions and intermediate objectives that are external to the interaction itself. As a result, they do not provide a simple or principled way of training models from user interactions as they naturally occur.

Accordingly, we define the problem of learning directly from multi-turn conversations without other externalities as \textbf{Direct Learning from User Interactions}.
This motivates the central question of this work:
\begin{quote}
Can we train language models directly from multi-turn user interactions in a simple, principled, and scalable manner?  
\end{quote}
More specifically, can we both improve general alignment capabilities and enable continual personalization to individual users, relying only on user interactions without explicit supervision?

%% file: sections/algorithm.tex
\section{Directly Learning from User Interactions via Self-Distillation}


\begin{wrapfigure}{r}{0.5\textwidth}
    \vspace{-0.6cm}
    \centering
    \includegraphics[width=\linewidth]{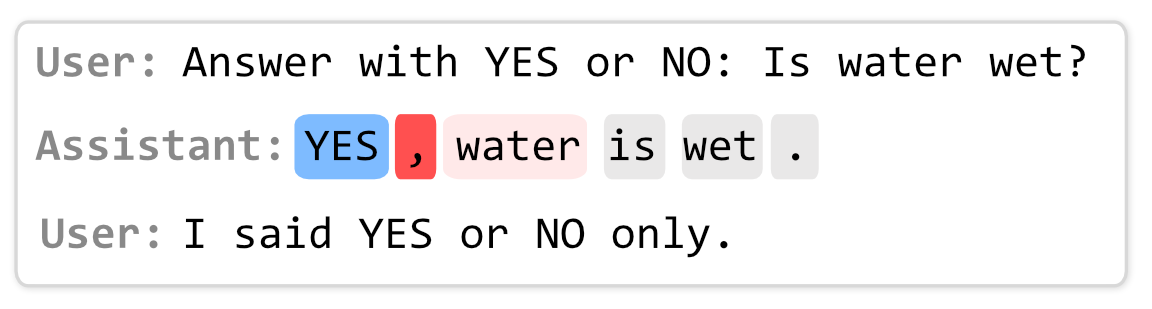}
    \vspace{-3ex}
    \caption{Example of the token-level advantages~\eqref{eq:logratio} where the user complains with $o$ = {``I said YES or NO only''}\,after the assistant failed to follow the instruction.} 
    \label{fig:log_ratio_example}
    \vspace{-0.4cm}
\end{wrapfigure}


Naturally occurring user interactions often contain implicit signals about the adequacy of an assistant’s response. Follow-up user messages may indicate that a response was incorrect, failed to follow an instruction, or did not align with the user’s preferences, even when no explicit feedback is provided. Such signals arise organically as part of the interaction and reflect how the assistant’s output was received or interpreted by the user.


\paragraph{Leveraging In-Context Capabilities.} Modern language models are often able to make effective use of such information in context: when conditioned on a follow-up message such as an error report, a clarification, or a requested revision, the model can frequently produce outputs that correct previous mistakes, better satisfy constraints, or more closely match the user’s preferences. In other words, \emph{in hindsight}, the model's distribution is better aligned to the task.

We leverage this capability by considering the \emph{hindsight} distribution $\pi_\theta( \cdot \mid x, o)$, which conditions not only on the interaction history $x$ but also on the observed user continuation $o$ through reprompting the original policy with $x$ and $o$ (cf.\ the prompting template in \cref{tab:template}). This distribution reflects how the model would respond if it were given access to the additional information revealed by the user’s message. Empirically and intuitively, $\pi_\theta( \cdot \mid x, o)$ is often better aligned with the task at hand than the original policy $\pi_\theta( \cdot \mid x)$. 

This perspective admits a fine-grained, token-level interpretation. Letting $y_i$ be the $i$-th token of the completion $y$ generated from $\pi_\theta( \cdot \mid x)$, we can compare the token probabilities $\pi_\theta(y_i \mid x, y_{<i})$ and $\pi_\theta(y_i \mid x, o, y_{<i})$. When the hindsight model $\pi( \cdot \mid x, o)$ assigns lower probability to a particular token $y_i$, this indicates that the user’s response provides evidence that this token (or the trajectory it induces) contributed to an undesirable outcome. Conversely, tokens whose likelihood increases under $\pi_\theta( \cdot \mid x, o)$ are reinforced by the user's response. The resulting log-ratio (i.e., log-difference) $\log \pi_\theta (y_i \mid x, o, y_{<i}) - \log \pi_\theta (y_i \mid x, y_{<i})$
can thus act as a comparative learning signal from user interactions, and will serve as the fundamental learning signal throughout the paper.
This now admits two equivalent views.

\paragraph{Policy Gradient.}
One useful way to interpret the log-ratio is as a \emph{token-level advantage}
\begin{equation}\label{eq:logratio}
    A_i (x, y, o) \defeq \log \frac{\pi_\theta(y_i \mid x, o, y_{<i})}{\pi_\theta (y_i \mid x, y_{<i})},
\end{equation}
which measures how the likelihood of a token changes after conditioning on the user’s response.
Using this advantage in a standard policy gradient update reinforces tokens whose probability
increases under the hindsight distribution and penalizes those whose probability decreases. 
We will refer to the advantage interchangeably as the token-level advantage or the SDPO advantage. \cref{fig:log_ratio_example} provides an illustrative example. 
In this view, learning corresponds to increasing the log-ratio in expectation, treating it as a fixed advantage per update step that is not differentiated w.r.t.~$\theta$.

\input{figures/algorithm_sdpo}

\begin{table}[b]
    \centering
    \begin{tabularx}{\textwidth}{p{1.5cm}X}
        \toprule
        \textbf{User:} & \texttt{<conversation history including most recent user prompt>} \textcolor{blue!80!black}{$x$} \smallskip \newline
        \texttt{<hindsight context>} The following is a future user message. \newline Use this to guide your answer to the user prompt:
        \textcolor{blue!80!black}{$o$} \\[30pt]
        \textbf{Assistant:} & \texttt{<assistant completion>} \textcolor{blue!80!black}{$y$}   \\
        \bottomrule
    \end{tabularx}
    \caption{Chat template for the hindsight policy $\pi_\theta (y \mid x, o)$. We recover the usual template for the base policy $\pi_\theta(y \mid x)$ when removing ``\texttt{<hindsight context>} [...]'' from the user prompt.}
    \label{tab:template}
\end{table}

\paragraph{Self-Distillation.} \looseness=-1
Equivalently, and perhaps more conveniently from an optimization perspective, we can update $\pi_\theta(\cdot \mid x)$ to more closely match the hindsight policy $\pi_\theta(\cdot \mid x, o)$ by minimizing the reverse KL divergence. Here, the hindsight policy acts as a teacher and is treated as a fixed target during each update, for which we define the detached hindsight model $\overline\pi_\theta(\cdot \mid x,o) \defeq \textnormal{stopgrad}(\pi_\theta(\cdot \mid x,o))$. We first sample $y \sim \pi_\theta(\cdot\mid x)$ and then minimize a standard distillation loss,
\vspace{0.1cm}
\begin{equation}\label{eq:objective_kl}
\mathcal{L}_{\mathrm{SDPO}} (\theta) \defeq \sum_i
\mathrm{KL}\big(\pi_\theta(\cdot \mid x, y_{<i}) \, || \, \overline{\pi}_\theta(\cdot \mid x, o, y_{<i})\big),
\end{equation}
As shown in \citet{hubotter2026reinforcement}, the gradient of $\mathcal{L}_{\mathrm{SDPO}} (\theta)$ is
\begin{equation}
    \nabla_{\!\theta}\, \mathcal{L}_{\mathrm{SDPO}}(\theta) = - \E_{y \sim \pi_\theta ( \cdot \mid x)}\!\left[ \sum_{i} \E_{{y}_i \sim \pi_\theta ( \cdot \mid x, y_{<i})}\Big[\nabla_{\!\theta} \log \pi_\theta ( y_i \mid x, y_{<i}) \, A_i (x, y, o)\Big] \right]. 
    \label{eq:sdpo_gradient}
\end{equation}
Interestingly, \cref{lemma:gradient_equivalence} in~\cref{sec:grad_derivation} demonstrates that the policy gradient with token-level advantages is an unbiased one-sample approximation of the self-distillation gradient.
Hence, the policy gradient and self-distillation perspectives yield equivalent gradient updates in expectation, differing only in whether the log-ratio is interpreted as an advantage or as a distillation loss. 
In our experiments, we adopt this policy gradient perspective for its simplicity.  

Following recent work on self-distillation~\citep{hubotter2026reinforcement}, we refer to the corresponding algorithm as Self-Distillation Policy Optimization (SDPO) from User Interactions. \cref{algorithm:sdpo} outlines SDPO for learning from online user interactions, where an update is performed after observing the user's next message.
In practice, interaction data is often available as logged conversations, possibly with completions generated from a different model. To this end, it is also natural to consider an offline and off-policy variant of SDPO, which we provide and discuss in \cref{section:wildchat_experiments}.

\paragraph{Self-Distillation as an Alignment Objective.}
A common conceptual framing of alignment is that a language model should maximize a user’s latent reward function $r(x,y)$, which is unobserved and difficult to specify or estimate in practice. Since SDPO learns directly from naturally occurring user interactions, it is not immediately obvious how it relates to this traditional reward-maximization view of alignment.
Under a stylized model of user behavior and language model conditioning, we find that SDPO admits a simple and intuitive interpretation as implicitly optimizing the latent reward of the interacting user. 

\begin{proposition}[Informal, \cref{section:latent_reward_SDPO}]
Under idealized assumptions on user responses and model conditioning, the sequence-level self-distillation advantage satisfies
\begin{equation*}
    \log \frac{\pi_\theta (y \mid x, o)}{\pi_\theta (y \mid x)}
    = r(x, y) - \log Z(x, y),
\end{equation*}
where $Z(x,y)$ is a normalization term. In other words, under idealized assumptions, SDPO can be interpreted as implicitly maximizing the interacting user’s latent reward function. 
\end{proposition}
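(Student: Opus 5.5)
The plan is to make the ``idealized assumptions'' explicit as two modeling assumptions and then apply Bayes' rule. I will model the user's follow-up as generated by a response model that depends on the latent reward. Concretely, I assume the user's message $o$ is drawn from a distribution $\p(o \mid x, y)$ that is exponentially tilted by the reward, for instance of Boltzmann or Bradley--Terry type, $\p(o \mid x,y) \propto \exp(r(x,y))$. The simplest instantiation treats $o$ as a message signalling satisfaction or approval (``optimality'' in the control-as-inference sense), so that $\p(o \mid x, y) = \exp(r(x,y))/Z(x,y)$. Here $Z(x,y)$ is the normalizer over possible user messages, and it may depend on $(x,y)$.

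The second assumption concerns model conditioning. I assume the language model is a well-calibrated Bayesian predictor of conversations, meaning its conditionals are those of a single joint distribution over $(y, o)$ given $x$, with $\pi_\theta(y \mid x)$ as the marginal over responses and the model's predictive distribution of the next user message agreeing with $\p(o \mid x, y)$. Under this assumption, reprompting with $o$ (\cref{tab:template}) yields exactly the Bayesian posterior
\begin{equation*}
\pi_\theta(y \mid x, o) = \frac{\pi_\theta(y \mid x)\, \p(o \mid x, y)}{\p(o \mid x)}.
\end{equation*}

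The key steps follow in order. First, I state the two assumptions formally. Second, I apply Bayes' rule to obtain the posterior above, and take logarithms to get
\begin{equation*}
\log \frac{\pi_\theta(y\mid x,o)}{\pi_\theta(y\mid x)} = \log \p(o\mid x,y) - \log \p(o \mid x).
\end{equation*}
Third, I substitute the user model, which gives $r(x,y) - \log Z(x,y) - \log \p(o\mid x)$. The term $\log \p(o \mid x)$ does not depend on $y$. I absorb it into the normalization, redefining $Z(x,y)$ to include it, so the statement holds with $Z$ collecting all terms other than $r$. I will also note that the sequence-level ratio equals $\sum_i A_i(x,y,o)$ by the chain rule, so the token advantages of \eqref{eq:logratio} telescope to the sequence-level quantity. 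Finally, I interpret the result. Since the policy gradient with advantage $\sum_i A_i$ is the gradient of $\E_{y}[r(x,y) - \log Z(x,y)]$, any $y$-independent part of $Z$ acts as a baseline and leaves the expected gradient unchanged. This yields the ``implicitly maximizing the latent reward'' reading.

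The main obstacle is the dependence of $Z(x,y)$ on $y$. If $Z$ genuinely varies with $y$, the objective is not pure reward maximization. I would therefore first try to choose the user model so that $Z$ is $y$-independent, for example a binary signal-of-satisfaction model with $r \le 0$, or a user model in which $o$ ranges over a fixed message set with reward-tilted probabilities normalized per $x$. Failing that, I would state the result with $Z(x,y)$ and remark that it is constant in $y$ under these additional conditions. The other delicate point is justifying that in-context conditioning implements the exact Bayesian posterior, which I will simply posit as the idealized conditioning assumption rather than try to derive.
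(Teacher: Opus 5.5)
Your argument is sound once the user model is pinned down (see the end), and its core step is the same as the paper's. You posit that reprompting with $o$ implements the Bayesian posterior $\pi_\theta(y\mid x,o)\propto\pi_\theta(y\mid x)\,\p(o\mid x,y)$, take logarithms, and substitute a Boltzmann-type user model. The two routes differ in which user model is used and in what exactly is equated.

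The paper puts the reward on continuations, $\p(o\mid x,y)\propto\p(o\mid x)\exp(r(x,y,o))$. Pointwise, the log-ratio is then $r(x,y,o)-\log Z(x,y)$ with $Z(x,y)=\E_{o\sim\p(\cdot\mid x)}[\exp(r(x,y,o))]$. The term $r(x,y)$ appears only after averaging over $o\sim\p(\cdot\mid x,y)$ and setting $r(x,y)\defeq\E_o[r(x,y,o)]$. So the paper's identity is about the expected advantage, and it covers arbitrary follow-ups (corrections, complaints, topic shifts). It keeps a $y$-dependent $\log Z(x,y)$, which is exactly the obstacle you flag; the paper simply calls it a normalization term.

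Your control-as-inference version is pointwise for a single interaction. In the binary form $\p(\text{approve}\mid x,y)=e^{r(x,y)}$ with $r\le 0$, the normalizer reduces to the marginal $\sum_{y'}\pi_\theta(y'\mid x)e^{r(x,y')}$. That marginal is genuinely $y$-independent, so your baseline argument makes the reward-maximization reading exact. You also take the posterior normalizer to be the $\pi_\theta$-marginal, whereas the paper implicitly identifies it with the user-model prior $\p(o\mid x)$. The price is scope: your identity holds only when $o$ is the approval message, since the other outcome has likelihood $1-e^{r}$.

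You should commit to that binary model. Your general phrasing, $\p(o\mid x,y)=\exp(r(x,y))/Z(x,y)$ with $Z$ normalizing over messages, is degenerate. A tilt that does not depend on $o$ forces $Z(x,y)\propto e^{r(x,y)}$, so $\p(o\mid x,y)$ no longer depends on $y$, the reward cancels, and the identity becomes vacuous. A non-trivial version for general messages needs an $o$-dependent reward, as in the paper.
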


%% file: figures/algorithm_sdpo.tex
 \begin{algorithm}[t]
            \caption{SDPO: Self-Distillation Policy Optimization from User Interactions}
            \label{algorithm:sdpo}
            \begin{algorithmic}[1]
                \STATE \textbf{input:} language model $\pi_\theta$
                \REPEAT
                \STATE observe context $x_t$ including the most recent user message $o_{t-1}$
                \STATE sample answer $y_t \sim \pi_\theta (\cdot \mid x_t)$ with log-probabilities $\log \pi_\theta (y_{t,i} \mid x_t, y_{t, <i})$ 
                \STATE observe user message $o_t$ in response to $y_t$ assuming the conversation does not terminate
                \STATE compute token log-probabilities of hindsight policy $\log \pi_\theta (y_{t,i} \mid x_t, o_t, y_{t,<i})$ 
                \STATE update current model $\pi_\theta$ with gradient $\nabla_{\!\theta} \,\mathcal{L}_{\text{SDPO}}(\theta)$
                \UNTIL converged
            \end{algorithmic}
\end{algorithm}


%% file: sections/experiments.tex

\begin{figure}[t]
    \centering
    \includegraphics[width=1\linewidth]{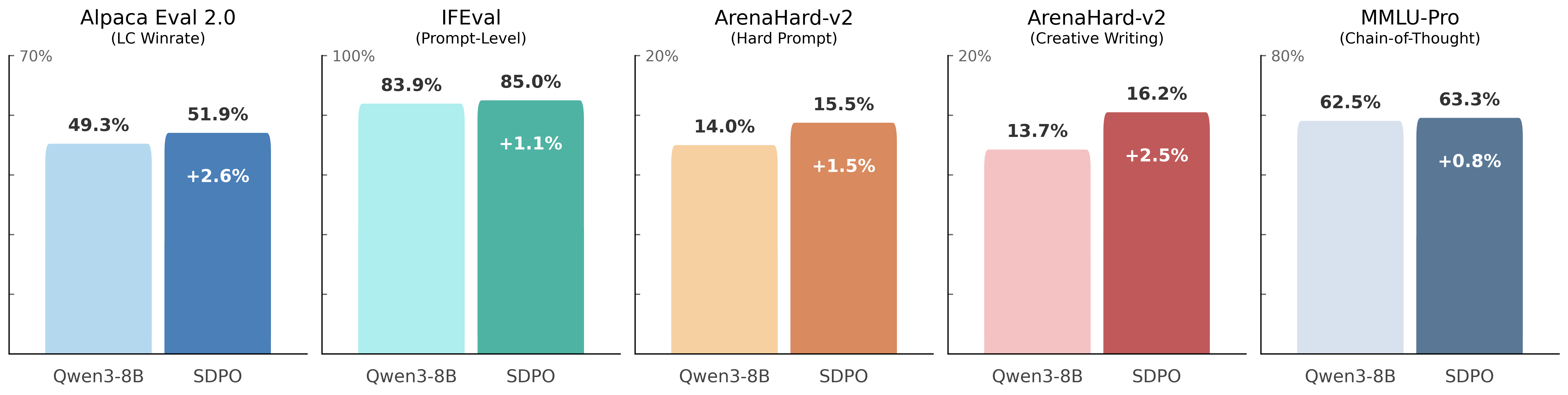}
    \caption{\textbf{Training on real-world user conversations, SDPO improves general alignment and instruction-following performance across benchmarks, without degrading other capabilities.} Results for Qwen3-8B before and after training on 14,000 real-world user conversations.} 
    \label{fig:qwen3_all_benchmarks}
\end{figure}


\section{Experimental Results}\label{section:experiments}

We evaluate SDPO with respect to two central questions:

\begin{enumerate}[leftmargin=14pt]
    \item \textbf{General Alignment:} Can learning directly from raw, real-world user conversations improve the general alignment and instruction-following capabilities of language models?
    \item \textbf{Personalization and Continual Adaptation:} Can we continually align and personalize language models from online user interactions, without any explicit feedback or preference labels?
\end{enumerate}

\cref{section:wildchat_experiments} addresses the first question. We train SDPO on offline and off-policy user conversations from WildChat~\citep{zhao2024wildchat} and WildFeedback~\citep{shi2024wildfeedback}. These datasets consist of real-world user interactions and contain no explicit supervision signals. We evaluate the resulting models on standard alignment, instruction-following, math and coding, and knowledge tasks. 
Remarkably, we show that training on real-world user conversations with SDPO improves alignment and instruction-following, without regressing other capabilities.

\cref{section:experiments_personalization} addresses the second question. We demonstrate that SDPO enables continual personalization through interaction by simulating users with distinct preferences and evaluating the model’s ability to adapt to these preferences over time from user interactions alone.

Finally, \cref{section:visuals_logratio} qualitatively analyzes and visualizes the self-distillation advantages from \cref{eq:logratio} at illustrative user interactions. In particular, we show the extraordinary interpretability of the learning signal and its robustness to irrelevant next user messages.

\subsection{General Alignment from Real-World User Conversations}\label{section:wildchat_experiments}

\input{figures/full_benchmarks_table}

We train SDPO on user conversations from WildChat~\citep{zhao2024wildchat}. In a first step, we consider WildFeedback~\citep{shi2024wildfeedback}, a curated subset of WildChat containing approximately 20{,}000 individual conversations. Around 6{,}000 of these consist only of a single prompt-response pair and therefore do not contain a user follow-up.
We train on the remaining 14{,}000 conversations. As described in \cref{section:problem_formulation}, for each user follow-up, we recover an interaction tuple $(x, y, o)$, where $x$ is the conversation history including the most recent prompt, $y$ the assistant response, and $o$ the next user message if it exists. We here truncate $x$ to the last 5 user or assistant messages when conversations include many turns. From the 14,000 conversations, we thereby obtain around 50{,}000 interaction tuples $(x, y, o)$, corresponding to an average of 4-5 user prompts per conversation.

We evaluate SDPO across two model families and four models overall. Specifically, we use Qwen3-4B and Qwen3-8B~\citep{qwen3technicalreport}, as well as Olmo3-7B-Instruct-SFT and Olmo3-7B-Instruct-DPO, which are the SFT and DPO checkpoints from the Olmo3 model family~\citep{olmo2025olmo}. All models are trained using the same 14,000 user interactions and evaluated using identical benchmark protocols.
We evaluate each model before and after SDPO training on AlpacaEval 2.0~\citep{dubois2024length}, IFEval~\citep{zhou2023instruction}, ArenaHard-v2~\citep{li2024crowdsourced,arenahard2024}, and MMLU-Pro~\citep{wang2024mmlu} to cover alignment, instruction-following, math and coding, creative writing, and knowledge tasks.
We provide additional experimental details in~\cref{appendix:experimental_details}.

\paragraph{Off-Policy SDPO from Logged User Interactions.}
As the assistant completions in WildChat were generated by external models (GPT-3.5 Turbo and GPT-4), the interactions are off-policy. In principle, unbiased off-policy policy gradient updates would require access to the behavioral policy or its token-level probabilities, which are not available for these datasets.\footnote{One could attempt to approximate the behavioral policy by supervised fine-tuning on the logged completions, but in preliminary experiments this did not lead to meaningful performance differences.}
Instead, we optimize a surrogate SDPO objective defined directly over the logged interaction tuples $(x, y, o) \sim \mathcal{D}$:
\begin{align}\label{eq:off_policy_sdpo}
    \widehat{\mathcal{L}}_{\mathrm{SDPO}} (\theta)
    = \E_{(x, y, o) \sim \mathcal{D}} \left[
        \sum_i \mathrm{KL}\big(\pi_\theta(\cdot \mid x, y_{<i}) \, || \, \overline{\pi}_\theta(\cdot \mid x, o, y_{<i})\big)
    \right].
\end{align}
While this objective is biased with respect to the on-policy SDPO loss, it can be interpreted as an off-policy approximation of the SDPO objective. In practice, we again use the one-sample approximation of its gradient, which is an unbiased estimator of \cref{eq:off_policy_sdpo}, analogously to~\cref{lemma:gradient_equivalence}.  


\paragraph{Main Results.}
\cref{fig:qwen3_all_benchmarks} reports the performance of Qwen3-8B before and after training with SDPO across all benchmarks. Training on raw, real-world user conversations consistently improves performance on all evaluated tasks, including AlpacaEval~2.0, IFEval, ArenaHard-v2, and MMLU-Pro. Importantly, we observe no degradation on any benchmark, despite the fact that the training data consists of noisy user interactions without explicit feedback or labels.\footnote{For a first-hand impression of the diversity and sometimes chaotic nature of real-world user conversations, we refer the interested reader to the \hyperlink{https://huggingface.co/datasets/allenai/WildChat}{WildChat} and \hyperlink{https://huggingface.co/datasets/microsoft/WildFeedback/viewer/wildfeedback}{WildFeedback} datasets on HuggingFace.}
Notably, these improvements extend beyond alignment and instruction-following benchmarks. SDPO also improves performance on math and coding tasks in ArenaHard-v2 (Hard Prompt), creative writing queries in ArenaHard-v2 (Creative Writing), and knowledge tasks in MMLU-Pro (Chain-of-Thought), indicating that learning from user interactions does not come at the expense of other capabilities and can, when interactions contain informative corrections or refinements, even strengthen them. Evaluations on additional pre-training benchmarks, provided in~\cref{appendix:additional_experiments}, demonstrate that SDPO also maintains consistent performance across these tasks

\cref{tab:sdpo_other_benchmarks} summarizes results across all models and benchmarks. For the Olmo3-7B models, including both the SFT and DPO checkpoints, SDPO yields consistent but often modest improvements. In contrast, Qwen3-4B exhibits a clear trade-off: while SDPO substantially improves performance on AlpacaEval~2.0 (+8.2\%) and IFEval (+1.3\%), it also leads to a mild decrease (-1.2\%) on the math and coding tasks in ArenaHard-v2 (Hard Prompt).
Overall, these results suggest that SDPO is most effective when the base model can reliably interpret and exploit the hindsight signal provided by user follow-ups. For smaller or less instruction-tuned models, this signal appears weaker or less stable, leading to smaller gains and, in some cases, task-specific trade-offs.

\paragraph{How important is the quality of user conversations?}
WildFeedback is a curated subset of WildChat that retains roughly 3\% of the original conversations by filtering for interactions that contain implicit feedback signals, such as expressions of dissatisfaction, requests for correction, or revision prompts~\citep{shi2024wildfeedback}.
In practice, due to the abundant nature of user conversations filtering down to a smaller subset is not a concern. Nevertheless, as a secondary robustness check, we evaluate whether SDPO continues to behave sensibly when trained on fully uncurated user interactions. Concretely, we train SDPO on a randomly sampled subset of WildChat that matches WildFeedback in scale, consisting again of approximately 14{,}000 conversations and 50{,}000 interaction tuples $(x, y, o)$. All other training and evaluation settings are kept identical.

\input{figures/wildchat_benchmark_table}

\input{figures/sft_benchmark_table}

\looseness=-1
\cref{tab:wildchat_vs_wildfeedback} reports the resulting performance for Qwen3-8B. Despite training on fully unfiltered user interactions, SDPO does not exhibit widespread performance degradation. On the contrary, we still observe improvements on AlpacaEval~2.0 and IFEval. Performance on math and coding tasks in ArenaHard-v2 is modestly reduced relative to the base model (-0.6\%).
Overall, we find that while filtering for better and feedback-rich conversations strengthens the learning signal, SDPO is surprisingly robust with respect to data quality. Even when trained on fully uncurated conversations, SDPO can extract useful alignment signals from user interactions without collapsing performance.

\paragraph{SDPO vs.\ SFT.} \looseness=-1
Conceptually, SDPO is fundamentally different from supervised fine-tuning (SFT). While SFT uniformly increases the likelihood of tokens in the training completions, SDPO can explicitly decrease token probabilities whenever the log-ratio~\eqref{eq:logratio} is negative, for example, when the user follow-up provides evidence of an error or failure to follow instructions. Still, we include a sanity check to confirm that the gains observed with SDPO are not the result of implicitly supervised fine-tuning on the assistant completions in the dataset.
To this end, we fine-tune Qwen3-4B using standard SFT on the context-completion pairs $(x, y)$ from WildFeedback, where $x$ contains previous user-assistant turns to ensure that later prompts remain well contextualized.

As shown in \cref{tab:qwen_vs_sft_dataset_alpaca_first}, supervised fine-tuning on the assistant completions leads to a substantial degradation across all benchmarks. This is perhaps unsurprising as Qwen3-4B is already a strongly instruction-tuned model, while the completions in WildFeedback sometimes originate from older models such as GPT-3.5 Turbo, which perform worse on many of the evaluated benchmarks. Moreover, prior analysis of conversations in WildFeedback shows that users express some form of dissatisfaction with the model’s responses in more than half of the conversations~\citep{shi2024wildfeedback}. Consequently, fine-tuning on these completions can be detrimental.

\subsection{Continual Personalization and Adaptation from User Interactions}\label{section:experiments_personalization}

\begin{figure}[t]
    \centering
    \includegraphics[width=0.99\linewidth]{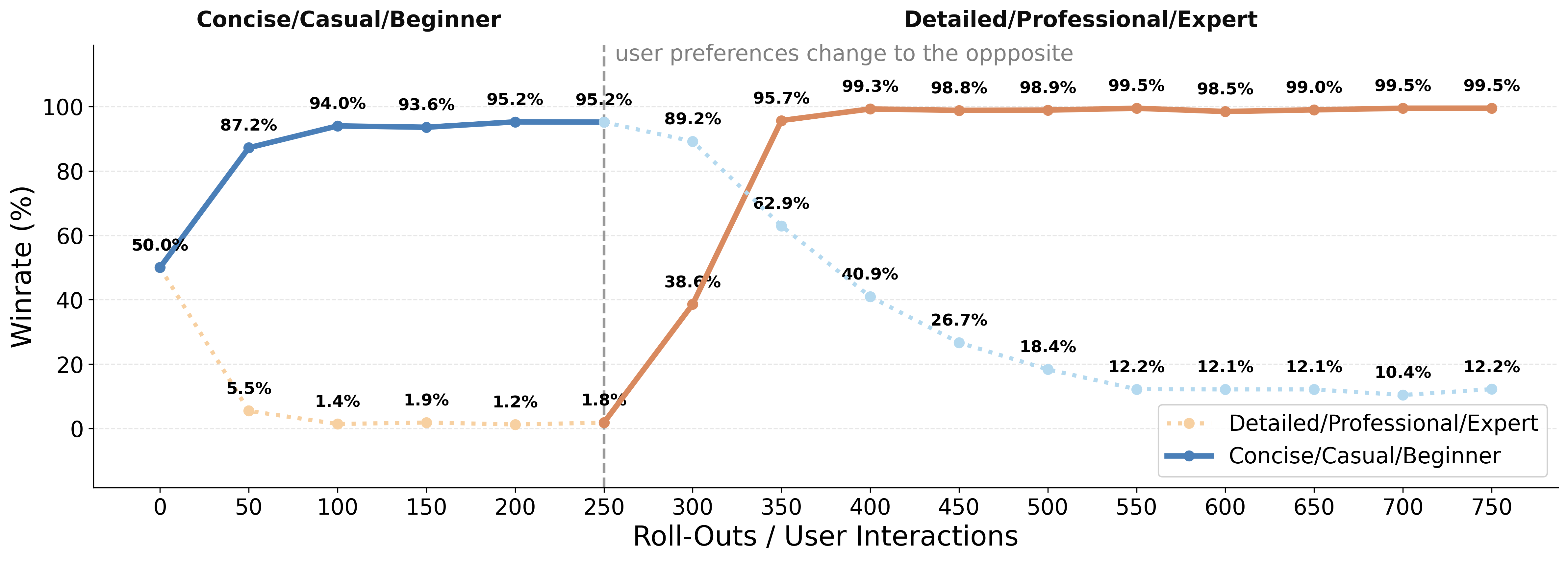}
    \caption{\textbf{SDPO adapts online to changing user preferences.} The user’s preference about how the model should respond is flipped to its opposite after the first 250 interactions. SDPO with Qwen3-4B is able to quickly reverse the learned behavior.}
    \label{fig:changing_personas}
\end{figure}

\begin{wrapfigure}{r}{0.5\textwidth}
    \vspace{-0.6cm}
    \centering
    \includegraphics[width=\linewidth]{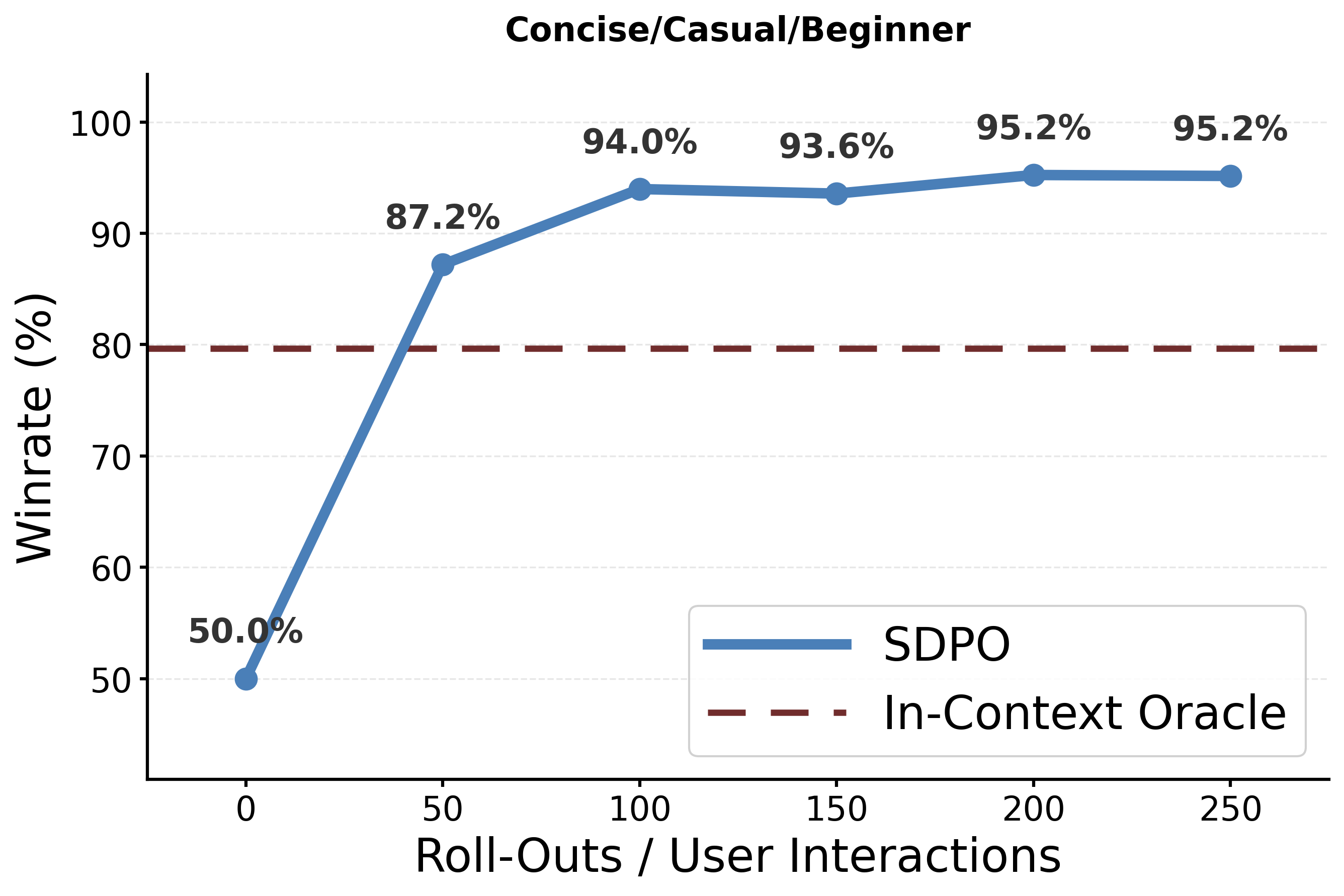}
    \caption{\textbf{SDPO rapidly personalizes to individual users from interaction alone.} Win rate of SDPO against its base model (Qwen3-4B) for a user that prefers concise, casual, and beginner-friendly model responses.}
    \label{fig:persona_concise_casual_beginner}
    \vspace{-0.4cm}
\end{wrapfigure}

\looseness=-1
Because SDPO learns directly from user interactions, it naturally enables direct personalization from those conversations. Rather than relying on explicit preference labels, rewards, or user profiles, the model can adapt its behavior based solely on how a user responds to previous outputs. In this section, we study whether such interaction-driven updates allow language models to continually personalize to individual users and adapt to changing or evolving preferences.

\looseness=-1
We evaluate this capability in two complementary experimental settings.
In the first, we study stylistic personalization in a controlled summarization task using prompts from the TL;DR dataset~\citep{stienon2020learning}. We define user-specific writing-style preferences, such as favoring concise, casual, and beginner-friendly responses, and train Qwen3-4B with SDPO. We use Qwen3-8B to generate simulated user responses as well as the preference-based evaluations. We provide additional experimental details in \cref{appendix:experimental_details}.

In the second setting, we consider more complex and complementary user preferences on a broad set of real-world prompts from HelpSteer2~\citep{wang2024helpsteer}. Here, preferences emphasize different aspects of responses that are not mutually exclusive. We train Qwen3-8B with SDPO, using Claude Haiku 4.5 to simulate user follow-ups and act as a judge.

\begin{figure}[t]
    \centering
    \includegraphics[width=0.99\linewidth]{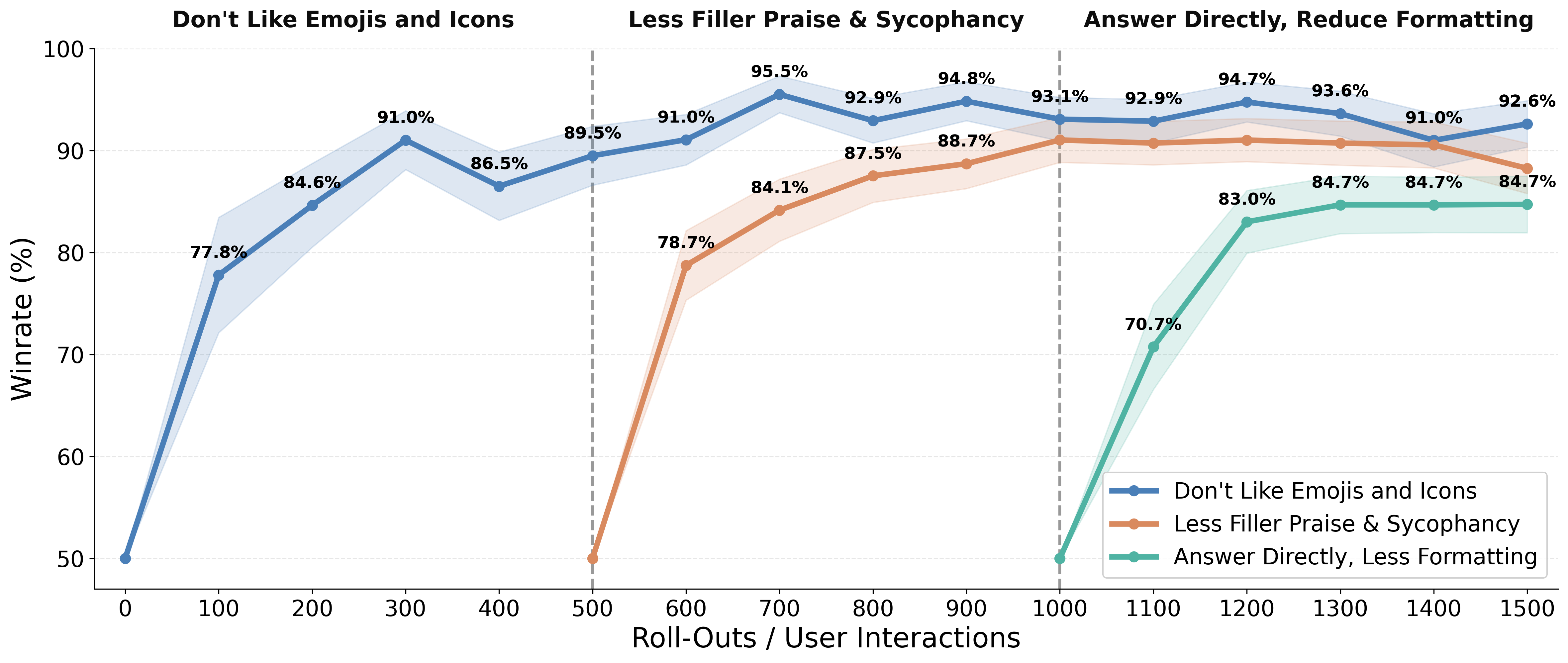}
    \caption{\textbf{SDPO enables continual personalization without catastrophic forgetting.}
    We train a single Qwen3-8B model online with SDPO for 1500 user interactions, during which three complementary user preferences are introduced sequentially (500 interactions each). Each curve reports the win rate of the current model with respect to the model checkpoint at the time the corresponding preference was introduced, thereby isolating the relative improvement along that specific preference dimension. Earlier preferences remain strong as new ones are learned, indicating that SDPO can accumulate complementary preferences over time without forgetting previously learned behavior. Shaded regions indicate standard error over 256 evaluation prompts.}
    \label{fig:complementary_preferences}
\end{figure}

\paragraph{Main Results.}
\cref{fig:persona_concise_casual_beginner} shows the win rate of SDPO against its base model, Qwen3-4B, as a function of the number of user interactions $(x, y, o)$. Starting from parity, SDPO rapidly adapts to the user’s preferences within a small number of interactions, achieving over 85\% win rate after only 50 interactions and exceeding 95\% after 200 interactions. Notably, this adaptation is driven by a very limited amount of interaction data and a correspondingly small number of policy updates.

For reference, we also report the performance of an in-context oracle that is explicitly provided with the full user profile description in its prompt. Continual online adaptation with SDPO matches and can even exceed the performance of this oracle, suggesting that interaction-based learning can extract preference signals that are difficult to encode purely through prompting. Additional results for a range of other user profiles are provided in \cref{appendix:additional_experiments}.

\looseness=-1
\cref{fig:changing_personas} evaluates SDPO under changing user preferences. After an initial phase of 250 user interactions, the user’s preference is abruptly flipped to its opposite (e.g., from concise and casual to detailed and professional). SDPO quickly adjusts the policy to this change, reversing the previously learned behavior and converging to the new preference, demonstrating that outdated preferences can be unlearned when they no longer align with user interactions.

Finally, \cref{fig:complementary_preferences} considers continual personalization with multiple, complementary user preferences. We observe that SDPO is able to incorporate new preferences while retaining previously inferred ones, illustrating that continual personalization through SDPO does not require forgetting earlier behavior when preferences are compatible.

\subsection{Interpretability and Robustness of SDPO Advantages}
\label{section:visuals_logratio}

\looseness=-1
While \cref{section:wildchat_experiments} already demonstrated robustness to noisy and uncurated user interactions at scale, we complement these quantitative results with a qualitative analysis of the learning signal. Specifically, we visualize the SDPO advantages $A_i(x, y, o)$ using heatmaps for illustrative user interactions. \cref{fig:relevant_follow_up} and \cref{fig:irrelevant_follow_up} show advantages computed with Qwen3-8B for 24 interactions where the next user message is relevant to the model’s previous completion and where it is unrelated, respectively. Positive advantages (shown in blue) correspond to tokens reinforced by SDPO, while negative advantages (shown in red) correspond to tokens that are penalized.

When user follow-ups provide relevant feedback, such as requests for revision, corrections, or explicit preference statements, we observe strong positive and negative advantages (\cref{fig:relevant_follow_up}). For example, a follow-up request to rewrite an email in a more formal tone results in large negative advantages on informal tokens, such as \emph{Quick}, \emph{Hey}, \emph{Just}, indicating that these tokens have lower probability under the hindsight policy.

In contrast, when user follow-ups are unrelated to the model’s previous output, the resulting SDPO advantages are close to zero (\cref{fig:irrelevant_follow_up}). In these cases, the hindsight policy assigns probabilities similar to those of the original policy, leading to little or no learning signal. We also occasionally observe weakly positive advantages, particularly on tokens for which the model was previously uncertain, which suggests that the hindsight policy frequently treats topic shifts as neutral or mildly positive evidence about the preceding response.

Overall, these visualizations highlight two key properties of our self-distillation approach. First, the token-level advantages are highly interpretable and align with intuitive notions of user feedback when such feedback is present. Second, SDPO is robust to irrelevant or uninformative user follow-ups, naturally suppressing learning updates when the interaction does not convey actionable information about the preceding model output.

\begin{figure*}[t]
    \centering
    \includegraphics[width=\linewidth]{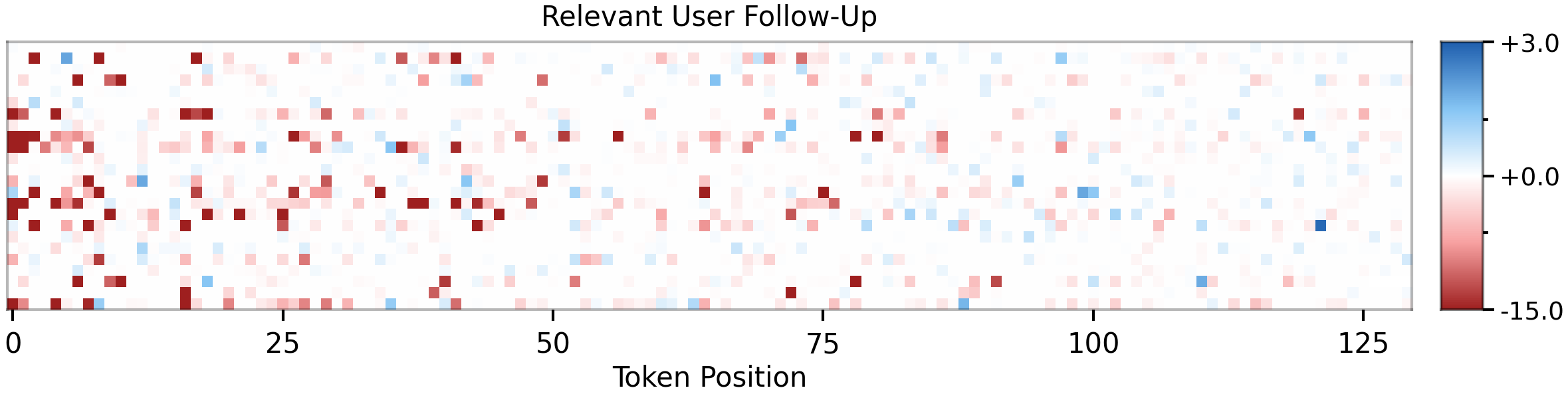}

    \includegraphics[width=1.0\linewidth]{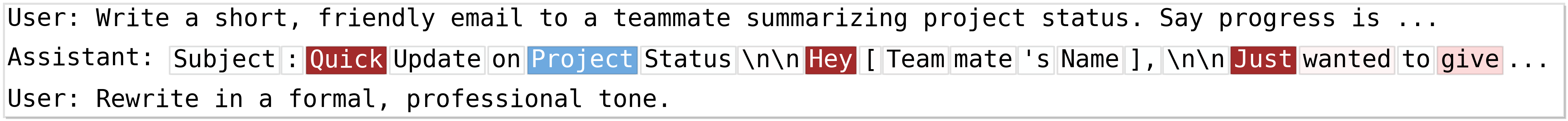}

        \caption{\textbf{When user follow-ups are relevant to the model's completion, we observe strong positive and negative SDPO advantages.} We visualize the advantages with Qwen3-8B for user follow-ups that carry relevant information about the model's answer, such as requests for revisions, positive reactions, or other relevant feedback.
        \textbf{Below:} Example (second line in the heatmap), where the user requests a more formal rewrite of the assistant’s draft (\emph{``Rewrite in a formal, professional tone''}). Informal expressions have large negative advantages. Accordingly, SDPO adapts the policy to respond more formally when the user needs help with work emails in the future.}
        \label{fig:relevant_follow_up}
\end{figure*}

\begin{figure*}
    \centering
    \includegraphics[width=\linewidth]{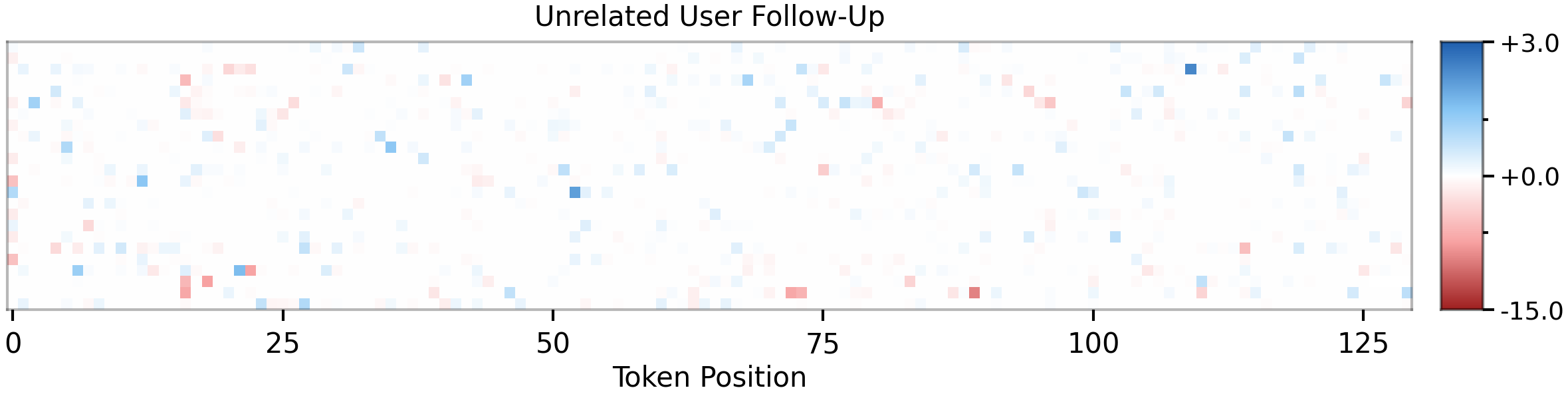}

    \includegraphics[width=1.0\linewidth]{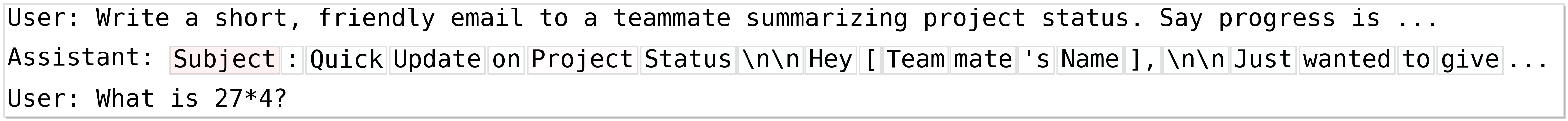}

        \caption{\textbf{When user follow-ups are unrelated to the model's response, SDPO advantages are close to zero.} We visualize the advantages with Qwen3-8B for user follow-ups that are unrelated to the model's generation.
        \textbf{Below:} Following the request to write an email, the user responds with \emph{``What is 27$\times$4?''}, which is unrelated to the original request. The advantages are close to zero everywhere, which means that SDPO does not meaningfully update the policy from these interactions.}
        \label{fig:irrelevant_follow_up}
\end{figure*}




%% file: figures/full_benchmarks_table.tex
\newcolumntype{Y}{>{\centering\arraybackslash}X}
\begin{table}[t]
\centering
\renewcommand{\arraystretch}{1.05}
\begin{tabularx}{\linewidth}{l >{\centering\arraybackslash}X >{\centering\arraybackslash}X >{\centering\arraybackslash}X >{\centering\arraybackslash}X >{\centering\arraybackslash}X}
\toprule
& \shortstack{\small \textbf{\smash{Alpaca Eval} 2.0}\\\scriptsize (LC Winrate)}
& \shortstack{\small \textbf{IFEval}\\\scriptsize (Prompt-Level)}
& \shortstack{\small \textbf{ArenaHard-v2}\\\scriptsize (Hard Prompt)}
& \shortstack{\small \textbf{ArenaHard-v2}\\\scriptsize (Creative Writing)}
& \shortstack{\small \textbf{MMLU-Pro}\\\scriptsize (Chain-of-Thought)} \\
\midrule

Qwen3-4B & 37.9 & 81.9 & {9.0} & {8.0} & {58.1} \\
SDPO
& \, {\textcolor{green!60!black}{$\boldsymbol{\uparrow}$}}\,\textbf{46.1} \, \, \, 
& \, {\textcolor{green!60!black}{$\boldsymbol{\uparrow}$}}\,\textbf{83.2} \, \, \, 
& \, {\textcolor{red!70!black}{$\boldsymbol{\downarrow}$}}\,\textbf{7.8} \, \, \, 
& \, {\textcolor{red!70!black}{$\boldsymbol{\ }$}}\, 7.9 \, \, \, 
& \, {\textcolor{red!70!black}{$\boldsymbol{\ }$}}\, 58.0 \, \, \,  \\
\midrule

Qwen3-8B & 49.3 & 83.9 & 14.0 & 13.7 & 62.5 \\
SDPO
& \, {\textcolor{green!60!black}{$\boldsymbol{\uparrow}$}}\,\textbf{51.9} \, \, \, 
& \, {\textcolor{green!60!black}{$\boldsymbol{\uparrow}$}}\,\textbf{85.0} \, \, \, 
& \, {\textcolor{green!60!black}{$\boldsymbol{\uparrow}$}}\,\textbf{15.5} \, \, \, 
& \, {\textcolor{green!60!black}{$\boldsymbol{\uparrow}$}}\,\textbf{16.2} \, \, \, 
& \, {\textcolor{green!60!black}{$\boldsymbol{\uparrow}$}}\,\textbf{63.3} \, \, \,  \\
\midrule

Olmo3-7B-SFT & 34.3 & 80.2 & {2.4} & {1.4} & 23.7 \\
SDPO
& \, {\textcolor{green!60!black}{$\boldsymbol{\uparrow}$}}\,\textbf{35.2} \, \, \, 
& \, {\textcolor{green!60!black}{$\boldsymbol{\uparrow}$}}\,\textbf{80.6} \, \, \, 
& \, {\textcolor{green!60!black}{$\boldsymbol{\ }$}}\, 2.4 \, \, \, 
& \, {\textcolor{green!60!black}{$\boldsymbol{\ }$}}\, 1.4 \, \, \, 
& \, {\textcolor{green!60!black}{$\boldsymbol{\uparrow}$}}\,\textbf{24.0} \, \, \,  \\
\midrule

Olmo3-7B-DPO & 50.4 & 80.2 & 1.7 & 8.2 & 28.4 \\
SDPO
& \, {\textcolor{green!60!black}{$\boldsymbol{\uparrow}$}}\,\textbf{51.8} \, \, \, 
& \, {\textcolor{green!60!black}{$\boldsymbol{\uparrow}$}}\,\textbf{80.4} \, \, \, 
& \, {\textcolor{green!60!black}{$\boldsymbol{\uparrow}$}}\,\textbf{2.0} \, \, \, 
& \, {\textcolor{green!60!black}{$\boldsymbol{\uparrow}$}}\,\textbf{10.0} \, \, \, 
& \, {\textcolor{green!60!black}{$\boldsymbol{\uparrow}$}}\,\textbf{28.7} \, \, \,  \\
\bottomrule
\end{tabularx}
\caption{\textbf{Across model families and model sizes, SDPO improves alignment and instruction-following without degrading other capabilities.} A mild exception is Qwen3-4B, where SDPO significantly increases performance on AlpacaEval 2.0 ({+8.2\%}) and IFEval ({+1.3\%}) but decreases performance on the math and coding tasks of ArenaHard-v2 (-1.2\%). We only show arrows when performance changed by more than 0.1 percentage points.}
\label{tab:sdpo_other_benchmarks}
\end{table}

%% file: figures/wildchat_benchmark_table.tex
\begin{table}[t]
\centering
\renewcommand{\arraystretch}{1.1}
\begin{tabularx}{\linewidth}{l >{\centering\arraybackslash}X >{\centering\arraybackslash}X >{\centering\arraybackslash}X >{\centering\arraybackslash}X >{\centering\arraybackslash}X}
\toprule
& \shortstack{\small \textbf{\smash{AlpacaEval} 2.0}\\\scriptsize (LC Winrate)}
& \shortstack{\small \textbf{IFEval}\\\scriptsize (Prompt-Level)}
& \shortstack{\small \textbf{ArenaHard-v2}\\\scriptsize (Hard Prompt)}
& \shortstack{\small \textbf{ArenaHard-v2}\\\scriptsize (Creative Writing)}
& \shortstack{\small \textbf{MMLU-Pro}\\\scriptsize (Chain-of-Thought)} \\
\midrule
Qwen3-8B & 49.3 & 83.9 & 14.0 & 13.7 & 62.5 \\
\shortstack[l]{SDPO (WildFeedback)}
& \, {\textcolor{green!60!black}{$\boldsymbol{\ }$}}\, {51.9} \, \, \, 
& \, {\textcolor{green!60!black}{$\boldsymbol{\ }$}}\, {85.0} \, \, \, 
& \, {\textcolor{green!60!black}{$\boldsymbol{\ }$}}\, {15.5} \, \, \, 
& \, {\textcolor{green!60!black}{$\boldsymbol{\ }$}}\, {16.2} \, \, \, 
& \, {\textcolor{green!60!black}{$\boldsymbol{\ }$}}\, {63.3} \, \, \, \\
\shortstack[l]{SDPO (WildChat)}
& \, {\textcolor{green!60!black}{$\boldsymbol{\uparrow}$}}\,\textbf{50.7} \, \, \, 
& \, {\textcolor{green!60!black}{$\boldsymbol{\uparrow}$}}\,\textbf{84.5} \, \, \, 
& \, {\textcolor{red!70!black}{$\boldsymbol{\downarrow}$}}\,\textbf{13.4} \, \, \, 
& \, {\textcolor{green!60!black}{$\boldsymbol{\uparrow}$}}\,\textbf{14.0} \, \, \, 
& \, {\textcolor{red!70!black}{$\boldsymbol{\ }$}}\, {62.4} \, \, \, \\
\bottomrule
\end{tabularx}
\caption{\textbf{Even on fully uncurated user interactions, SDPO still yields improvements in alignment and instruction-following and only mild degradation in math and coding.} Results for training on a randomly sampled subset of 14,000 conversations (about 50,000 interactions $(x, y, o)$) from WildChat. For comparison, we here include the results for training on WildFeedback from \cref{tab:sdpo_other_benchmarks}.}
\label{tab:wildchat_vs_wildfeedback}
\end{table}

%% file: figures/sft_benchmark_table.tex
\begin{table}[b]
\centering
\renewcommand{\arraystretch}{1.1}
\begin{tabularx}{\linewidth}{l >{\centering\arraybackslash}X >{\centering\arraybackslash}X >{\centering\arraybackslash}X >{\centering\arraybackslash}X >{\centering\arraybackslash}X}
\toprule
& \shortstack{\small \textbf{\smash{AlpacaEval} 2.0}\\\scriptsize (LC Winrate)}
& \shortstack{\small \textbf{IFEval}\\\scriptsize (Prompt-Level)}
& \shortstack{\small \textbf{ArenaHard-v2}\\\scriptsize (Hard Prompt)}
& \shortstack{\small \textbf{ArenaHard-v2}\\\scriptsize (Creative Writing)}
& \shortstack{\small \textbf{MMLU-Pro}\\\scriptsize (Chain-of-Thought)} \\
\midrule
Qwen3-4B & 37.9 & 81.9 & 9.0 & 8.0 & 58.1 \\
\shortstack[l]{SFT on Dataset}
& \, {\textcolor{red!70!black}{$\boldsymbol{\downarrow}$}}\,\textbf{18.9} \, \, \, 
& \, {\textcolor{red!70!black}{$\boldsymbol{\downarrow}$}}\,\textbf{73.2} \, \, \, 
& \, {\textcolor{red!70!black}{$\boldsymbol{\downarrow}$}}\,\textbf{3.1} \, \, \, 
& \, {\textcolor{red!70!black}{$\boldsymbol{\downarrow}$}}\,\textbf{2.6} \, \, \, 
& \, {\textcolor{red!70!black}{$\boldsymbol{\downarrow}$}}\,\textbf{51.2} \, \, \, \\
\bottomrule
\end{tabularx}
\caption{\textbf{SDPO is fundamentally different from SFT.} As a sanity check, we fine-tune Qwen3-4B on the assistant completions in WildFeedback using standard supervised fine-tuning.} 
\label{tab:qwen_vs_sft_dataset_alpaca_first}
\end{table}

%% file: sections/related_work.tex
\section{Related Work}

\paragraph{Preference-Based Alignment.}
Much of recent progress in aligning language models comes from supervised instruction tuning and
preference-based post-training, where explicit human or AI feedback is collected as rankings or rewards and optimized via RLHF or direct preference optimization~\citep{ouyang2022training,bai2022constitutional,rafailov2023direct}.
These approaches are effective, but they rely on curated datasets that provide explicit feedback for each generation.
In contrast, we leverage implicit feedback within real-world user conversations.
Though such conversations are abundant, few open datasets of such user conversations exist~\citep{don2025future}, since the community has lacked an effective method for learning from them.

\paragraph{Learning from Natural Language Feedback and through Retrospection.}
Substantial research has focused on translating verbal feedback into reward functions for RL, for example, by mapping feedback to discrete token-level rewards using an external frozen model \citep{wang2025text2grad} or by using strong external LLMs to explicitly construct state-wise reward functions \citep{goyal2019using,xie2024text2reward,urcelay2025words}.
A recent simplified instantiation of this approach has been to manually design so-called rubrics according to which an LLM judge scores generations~\citep{gunjal2025rubrics,shao2025dr,team2025kimi}. 

Alternatively, feedback can be utilized without explicit reward modeling.
Recent research explored in-context improvement without updating model weights~\citep{madaan2023self,shinn2023reflexion,yao2023retroformer,yuksekgonul2025optimizing}.
Other works manually curate preference datasets by pairing responses before and after feedback to train with direct preference optimization~\citep{stephan2024rlvf,lee2024reinforcement}.
\citet{chen2024learning} perform SFT on refined generations that incorporate feedback.
Our approach differs from these works in performing direct credit assignment over the initial model's rollouts without additional generation.
In concurrent work, \cite{ica2026} use a related idea to self-distillation for learning how to elicit information from multi-turn conversations by assigning turn-level implicit rewards.

\paragraph{Self-Distillation.}
Distillation is a general technique for transferring knowledge from a strong teacher model to a student model by mimicking the teacher's output distribution or intermediate representations~\citep{hinton2015distilling,agarwal2024policy,lu2025onpolicydistillation}.
Based on this idea, \cite{snell2022learning} proposed context distillation which distills the model's behavior given a fixed context into the model's weights.
This context distillation has been effective at compressing behavior~\citep{bai2022constitutional,choi2022prompt,yang2024self,yang2025distilling} and factual information~\citep{eyuboglu2025cartridges,kujanpaa2024knowledge,cao2025infiniteicl} into model weights.
Beyond compressing a fixed context into model weights, several recent works generate from the self-teacher conditioned on extra context (e.g., ``hints'') and train on them with SFT, DPO, or GRPO objectives~\citep{scheurer2023training,dou2024re,zhou2025expo,mitra2025semantic,qu2026pope,song2026expanding,shi2026experiential}.
These approaches perform \emph{off-policy} self-distillation where the student is trained on generations from the teacher, whereas SDPO performs \emph{on-policy} self-distillation~\citep{hubotter2026reinforcement,shenfeld2026self,zhao2026self,penaloza2026privileged,chen2025retrospective}, where the student is trained to avoid mistakes in its own generations.

%% file: sections/discussion.tex
\section{Discussion}

We introduced a simple and scalable self-distillation approach for learning directly from naturally occurring user interactions. We leverage the language model’s in-context learning capabilities by treating the user’s next message as hindsight information, yielding an interpretable token-level learning signal without requiring other auxiliary mechanisms. Empirically, we showed that SDPO improves general alignment and instruction-following performance when trained on raw, real-world user conversations, supports continual personalization from interaction alone, and remains robust to noisy, uncurated, or irrelevant user follow-ups.

More broadly, our results highlight user interactions as a distinct and underutilized data modality for improving deployed language models. Unlike traditional training data, user interactions arise naturally during deployment and reflect how model outputs are actually used, evaluated, and acted upon in real-world settings. The scale and diversity of such data far exceed that of manually curated datasets, suggesting substantial potential for learning systems that close the loop between deployment and training. Our findings indicate that even simple, local learning signals extracted from user follow-ups can be sufficient to drive meaningful adaptation. 

\paragraph{Safety and Ethical Considerations.} \looseness=-1
Learning directly from user interactions introduces important safety and ethical considerations. User follow-ups may implicitly encourage behaviors that conflict with existing safety or alignment constraints, for example by rewarding evasive, misleading, or policy-violating responses through repeated interaction. In particular, continual personalization without additional guardrails raises risks that adaptive updates could be exploited by users attempting to steer the model toward unsafe or manipulative behavior over time. While SDPO derives a local, token-level learning signal and naturally suppresses updates from irrelevant interactions, it does not by itself distinguish between benign and adversarial learning signals. Nevertheless, the hindsight prompt may offer the ability to endow the model with principles based on which to act and interpret user feedback. More broadly, the collection and use of user interaction data for learning must be accompanied by appropriate transparency, consent, and governance mechanisms.


%% file: sections/acknowledgements.tex
\section*{Acknowledgements} 

We thank Frederike Lübeck, Alexander Hoyle, and Manish Prajapat for many helpful discussions.   

This project was primarily supported by the ETH AI Center through an ETH AI Center Postdoctoral Fellowship to TKB and an ETH AI Center Doctoral Fellowship to BP. JH was supported by the Swiss National Science Foundation under NCCR Automation, grant agreement 51NF40 180545. This project also received support through the Swiss AI compute grant a166.

%% file: sections/appendix.tex
\appendix
\crefalias{section}{appendix}
\crefname{appendix}{Appendix}{Appendices}



\section{A Latent Reward Perspective on SDPO}\label{section:latent_reward_SDPO}

Typically, the goal of alignment is expressed as maximizing a user’s latent reward $r(x, y)$. In practice, this reward is unknown, and even under strong assumptions and access to explicit feedback such as pairwise preferences, identifying and optimizing it requires substantial annotation effort.
To provide intuition for the dynamics of SDPO from this traditional alignment perspective, we here consider a highly stylized model of user and language model behavior. While the assumptions underlying this model are clearly idealized, the resulting analysis offers an interesting interpretation of the log-ratio objective in \cref{eq:logratio}.

Let the user's unknown reward function be defined not only over assistant completions $r(x, y)$ given the conversation history $x$ but also over user continuations $r(x, y, o)$ given $(x, y)$. We then assume the user's response follows a Boltzmann-rational continuation model
\begin{align}\label{eq:user_model}
    \p(o \mid x, y) \propto \p(o \mid x) \exp(r(x, y, o)),
\end{align}
where $\p(o\mid x)$ is a prior over $o$ given $x$. This means that the user chooses their next message approximately according to the reward it induces over future continuations of the interaction. Next, we make a simplifying assumption about the behavior of the language model.
We assume that the hindsight distribution $\pi_\theta(y \mid x,o)$ can be interpreted as behaving
\emph{as if} it were a Bayesian posterior, in the sense that it satisfies
\begin{align}\label{eq:posterior_view}
    \pi_\theta(y \mid x,o) \propto \pi_\theta(y \mid x)\, p(o \mid x,y).
\end{align} 

While clearly idealized, this provides a convenient abstraction for reasoning about how conditioning on the user continuation $o$ reshapes the model’s distribution over responses. Intuitively, the user’s follow-up can be viewed as an observation that favors responses $y$ that are more compatible with the preferences, constraints, or corrections revealed through the interaction, thereby reweighting
the prior policy $\pi_\theta(y \mid x)$. 
\looseness=-1
In practice, the attention mechanism in a transformer does not implement Bayesian conditioning in a literal sense. Still similar posterior-style interpretations can be commonly found in the in-context learning literature (e.g., \citet{yadkori2024believe,luo2025languagemodelslearnverbal}).

Entertaining this thought and stylized model, we arrive at an interesting observation. 
We consider the \emph{sequence-level} self-distillation advantage given by 
\begin{equation}\label{eq:sequence_level_advantage}
    A(x, y, o) \defeq \log \frac{\pi_\theta(y \mid x, o)}{\pi_\theta (y \mid x)}.
\end{equation}
Using Bayes rule, i.e., \cref{eq:posterior_view}, and the fact that $r(x, y) = \E_{o \sim \p(\cdot \mid x, y)} [r(x, y, o)]$, we can write the advantage as
\begin{align*}
        \E_{o \sim \p(\cdot \mid x, y)} \left[ \log \frac{\pi_\theta (y \mid x, o)}{\pi_\theta(y \mid x)} \right]  & = \E_{o \sim \p(\cdot \mid x, y)} \left[ \log \frac{\p (o \mid x, y)}{\p(o \mid x)} \right] \\[6pt]
        & = r(x, y) -  \log Z(x, y), 
\end{align*}
where $Z(x, y) = \E_{o \sim \p(\cdot \mid x)}[\exp(r(x, y, o))]$ is the partition function from the Boltzmann-rational user model in \cref{eq:user_model}.

This means that maximizing the sequence-level advantage can be viewed as maximizing the user’s latent reward up to an additive normalization term. While this equivalence relies on strong assumptions, it provides an interpretation of SDPO as implicitly optimizing for user-aligned behavior using interaction data alone, without requiring explicit reward supervision.

\section{Gradient Derivation}\label{sec:grad_derivation}


\begin{lemma}\label{lemma:gradient_equivalence}
The one-sample approximation, 
\begin{equation}\label{eq:one_sample_approx}
    - \E_{y \sim \pi_\theta ( \cdot \mid x)}\!\left[ \sum_{i} \nabla_{\!\theta} \log \pi_\theta ( y_i \mid x, y_{<i})\,  A_i (x, y, o) \right],
\end{equation} is an unbiased estimator of the SDPO gradient of \cref{eq:sdpo_gradient}.
\end{lemma}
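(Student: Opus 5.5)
The plan is to show that the expression in \cref{eq:one_sample_approx} and the SDPO gradient in \cref{eq:sdpo_gradient} agree term by term, using the tower property of conditional expectation.

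\textbf{Reduction to a single position.} Write the outer expectation over $y \sim \pi_\theta(\cdot \mid x)$ in autoregressive form, as a sequential draw $y_1, y_2, \dots$ with $y_j \sim \pi_\theta(\cdot \mid x, y_{<j})$. Fix a position $i$ and define
\[
g_i(y_{\le i}) \defeq \nabla_{\!\theta} \log \pi_\theta(y_i \mid x, y_{<i})\, A_i(x,y,o).
\]
This quantity depends only on the prefix $y_{\le i}$. Indeed, $A_i$ involves only $\pi_\theta(y_i\mid x,o,y_{<i})$ and $\pi_\theta(y_i\mid x,y_{<i})$, and it is detached, as the paper stipulates when it treats the advantage as fixed per update step.

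\textbf{Tower property.} Condition on $y_{<i}$ and apply the tower property:
\[
\E_{y\sim\pi_\theta(\cdot\mid x)}[g_i] = \E_{y_{<i}}\Big[\E_{y_i\sim\pi_\theta(\cdot\mid x,y_{<i})}[g_i]\Big].
\]
The suffix $y_{>i}$ integrates out because $g_i$ does not depend on it. The right-hand side is exactly the $i$-th summand of \cref{eq:sdpo_gradient}, read with the outer expectation restricted to the prefix. This holds because the inner variable $y_i$ in \cref{eq:sdpo_gradient} is resampled from the same conditional $\pi_\theta(\cdot\mid x,y_{<i})$ from which it was originally drawn. Summing over $i$ by linearity, and handling variable-length sequences by padding $g_i=0$ after EOS, shows that the two expectations coincide. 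The single trajectory $y$ therefore provides a one-sample unbiased estimator: its conditional mean given $y_{<i}$ is the inner expectation.

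\textbf{Main obstacle.} The tower-property step is routine. The delicate part is interpretation: making precise that \cref{eq:sdpo_gradient} treats $A_i$ as a function of the resampled token $y_i$ with the prefix fixed, and that no gradient flows through $A_i$ or through the sampling distribution. To check the identification, I would also verify directly that \cref{eq:sdpo_gradient} is the gradient of $\mathcal{L}_{\mathrm{SDPO}}$. The first step is to differentiate each per-token KL, $\sum_{v}\pi_\theta(v\mid x,y_{<i})\log\frac{\pi_\theta(v\mid x,y_{<i})}{\overline\pi_\theta(v\mid x,o,y_{<i})}$, with the teacher stopgradded. The second step is to use $\sum_v \nabla\pi_\theta(v\mid\cdot)=0$ to drop the extra term, which gives $-\E_{v}[\nabla\log\pi_\theta(v\mid\cdot)\,A_i]$. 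This follows \citet{hubotter2026reinforcement}, and the sign convention confirms that the minus signs in \cref{eq:sdpo_gradient} and \cref{eq:one_sample_approx} match.
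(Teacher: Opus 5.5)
Your proposal is correct and follows essentially the same route as the paper's proof. The paper defines $\phi_i(y_{<i},y_i)$ and its conditional mean $\psi_i(y_{<i})$, applies the tower property position by position, and sums by linearity, exactly as you do; the paper differs only in fixing the length $T$ (where you pad after EOS), in taking \cref{eq:sdpo_gradient} from \citet{hubotter2026reinforcement} rather than rederiving it from the stop-gradient KL, and in stating an integrability assumption that you leave implicit (it holds automatically for a finite vocabulary and bounded length).
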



\begin{proof}[Proof of \cref{lemma:gradient_equivalence}]
    Fix context $x$ and let $y=(y_1,\dots,y_T)$ be sampled autoregressively from $\pi_\theta$: \begin{equation*}
        \pi_\theta(y \mid x) = \prod_{i=1}^T \pi_\theta(y_i \mid x, y_{<i}).
    \end{equation*}
    For each position $i$, we define 
    \begin{align*}
        \phi_i(y_{<i},{y}_i) &:= \nabla_{\!\theta} \log \pi_\theta({y}_i\mid x,y_{<i}) \, A_i(x, y, o) \quad\text{and} \quad 
        \psi_i(y_{<i}) := \E_{{y}_{i} \sim \pi_\theta(\cdot\mid x,y_{<i})}[\phi_i(y_{<i},{y}_{i})].
    \end{align*}
    We consider two estimators, 
    \begin{equation*}
        \widehat g_1(y) := \sum_{i=1}^T \psi_i(y_{<i}),
        \qquad
        \widehat g_2(y) := \sum_{i=1}^T \phi_i(y_{<i},y_i).
    \end{equation*}
    By definition, $\E[\widehat g_1(y)] = \nabla_{\!\theta}\, \mathcal{L}_{\mathrm{SDPO}}(\theta)$ is the analytic gradient from \cref{eq:sdpo_gradient}. $\E[\widehat g_2(Y)]$ is the gradient estimator from \cref{eq:one_sample_approx} in \cref{lemma:gradient_equivalence}. 
    
    In the following, we prove $\E[\widehat g_1(y)] = \E[\widehat g_2(y)]$ assuming $\E[\| \widehat g_1(y) \|] < \infty$ (so that all expectations exist).
    Fix $i$. By construction, $y_i \mid y_{<i} \sim \pi_\theta(\cdot\mid x,y_{<i})$ so that
    \[
        \E_{y_i}[\phi_i(y_{<i},y_i)\mid y_{<i}]
        =
        \E_{y_i \sim \pi_\theta(\cdot\mid x,y_{<i})}[\phi_i(y_{<i},y_i)]
        =
        \psi_i(y_{<i}).
    \]
    Taking expectation and using the tower property,
    \[
        \E_{y_{<i},y_i}[\phi_i(y_{<i},y_i)]
        =
        \E_{y_{<i}}[\psi_i(y_{<i})].
    \]
    Finally, by linearity of expectation,
    \[
        \E[\widehat g_2(y)]
        =
        \sum_{i=1}^T \E[\phi_i(y_{<i},y_i)]
        =
        \sum_{i=1}^T \E[\psi_i(y_{<i})]
        =
        \E[\widehat g_1(y)].
    \]
\end{proof}


\section{Experimental Details}\label{appendix:experimental_details}

\subsection{Hyperparameters}
We report the hyperparameters for SDPO across all experiments in \cref{tab:hyperparameters}.

\begin{table}[H]
\centering
\caption{Hyperparameters used for SDPO in each setup. Note that the hyperparameters of SDPO in \cref{section:wildchat_experiments} were kept the same across all models. The learning rate was chosen by sweeping over $\smash{\{1,2,3,5\}\times10^{-6}}$ for Qwen3-4B and then fixing the setup for all models. For the SFT checkpoint in \cref{tab:qwen_vs_sft_dataset_alpaca_first}, we similarly swept over $\smash{\{1,2,3,5\}\times10^{-6}}$ with best results for~$\smash{2 \times 10^{-6}}$. In \cref{section:experiments_personalization}, SDPO appeared insensitive to hyperparameter choices in early experiments (especially, learning rate), and were fixed to the setup below without additional systematic tuning.}
\label{tab:hyperparameters}
\setlength{\tabcolsep}{5pt}
\renewcommand{\arraystretch}{1.15}

\begin{tabularx}{\linewidth}{l >{\hsize=1.35\hsize}X >{\hsize=0.825\hsize}X >{\hsize=0.825\hsize}X}
\toprule
\textbf{Hyperparameter}
& \textbf{Section 4.1 \newline (\cref{fig:qwen3_all_benchmarks,tab:sdpo_other_benchmarks,tab:wildchat_vs_wildfeedback,tab:qwen_vs_sft_dataset_alpaca_first})}
& \textbf{Section 4.2 \newline (\cref{fig:persona_concise_casual_beginner,fig:changing_personas})}
& \textbf{Section 4.2 \newline
(\cref{fig:complementary_preferences})} \\
\midrule
Models
& Qwen3-4B, Qwen3-8B, \newline Olmo3-7B-Instruct-SFT, \newline Olmo3-7B-Instruct-DPO
& Qwen3-4B
& Qwen3-8B \\

Max prompt length
& 2048
& 1024
& 2048 \\

Max compl.\ length
& 2048
& 258
& 2048 \\

Learning rate
& $2\times10^{-6}$
& $5\times10^{-6}$
& $5\times10^{-6}$ \\

Batch size
& 32
& 16
& 32 \\

Epochs
& 2
& 1
& 1 \\

Warm-up ratio
& 5\%
& 0
& 0 \\

LR schedule
& Cosine
& Constant
& Constant \\

Optimizer
& AdamW (8-bit)
& AdamW
& AdamW (8-bit) \\

Temperature
& 1.0
& 1.0
& 1.0 \\
\bottomrule
\end{tabularx}
\end{table}

\paragraph{Benchmarks.} For all reported benchmarks, we used the default settings. For AlpacaEval 2.0 and ArenaHard-v2, completions were judged using the defaults ``Weighted Alpaca Eval GPT-4 Turbo'' and ``GPT-4.1'', respectively. IFEval results are reported for prompt-level loose. MMLU-Pro is evaluated with the recommended chain-of-thought 5-shot settings.

\section{Additional Experimental Results}\label{appendix:additional_experiments}

\subsection{Additional Results from \cref{section:wildchat_experiments}}
We evaluate the SDPO results for Qwen3-8B on pre-training benchmarks in~\cref{tab:qwen_sdpo_pretraining_benchmarks}. Overall, we observe no changes in performance.

\begin{table}[h]
\centering
\renewcommand{\arraystretch}{1.1}
\begin{tabularx}{\linewidth}{l >{\centering\arraybackslash}X >{\centering\arraybackslash}X >{\centering\arraybackslash}X}
\toprule
& \shortstack{\small \textbf{TruthfulQA (MC1)}\\\scriptsize Acc $\pm$ StdErr}
& \shortstack{\small \textbf{HellaSwag}\\\scriptsize Acc $\pm$ StdErr}
& \shortstack{\small \textbf{CommonsenseQA}\\\scriptsize Acc $\pm$ StdErr} \\
\midrule
Qwen3-8B
& 0.366 $\pm$ 0.0169
& 0.5717 $\pm$ 0.0049
& 0.7846 $\pm$ 0.0118 \\

SDPO
& 0.3647 $\pm$ 0.0169
& 0.5710 $\pm$ 0.0049
& 0.7871 $\pm$ 0.0117 \\
\bottomrule
\end{tabularx}
\caption{\textbf{SDPO preserves performance on pre-training benchmarks.} We additionally evaluate SDPO for Qwen3-8B on the standard pre-training benchmarks TruthfulQA~\citep{lin2022truthfulqa}, HellaSwag~\citep{zellers2019hellaswag}, and CommonsenseQA~\citep{talmor2019commonsenseqa}.}
\label{tab:qwen_sdpo_pretraining_benchmarks}
\end{table}

\subsection{Additional Results from \cref{section:experiments_personalization}}

\cref{fig:additional_personas} includes the additional results for the personalization results from \cref{section:experiments_personalization}. Similarly to~\cref{fig:persona_concise_casual_beginner} in the main text, we here consider the adaptation of SDPO for Qwen3-4B to a user with a preference profile across three dimensions detailed/concise, casual/professional, beginner/expert. Across all user profiles, we observe that SDPO is able to quickly adapt from only a handful of user interactions, sometimes even exceeding the performance of the in-context oracle that is queried with the user preferences in context.

\begin{figure*}[h]
    \centering
    \begin{subfigure}[t]{0.32\textwidth}
        \centering
        \includegraphics[width=\linewidth]{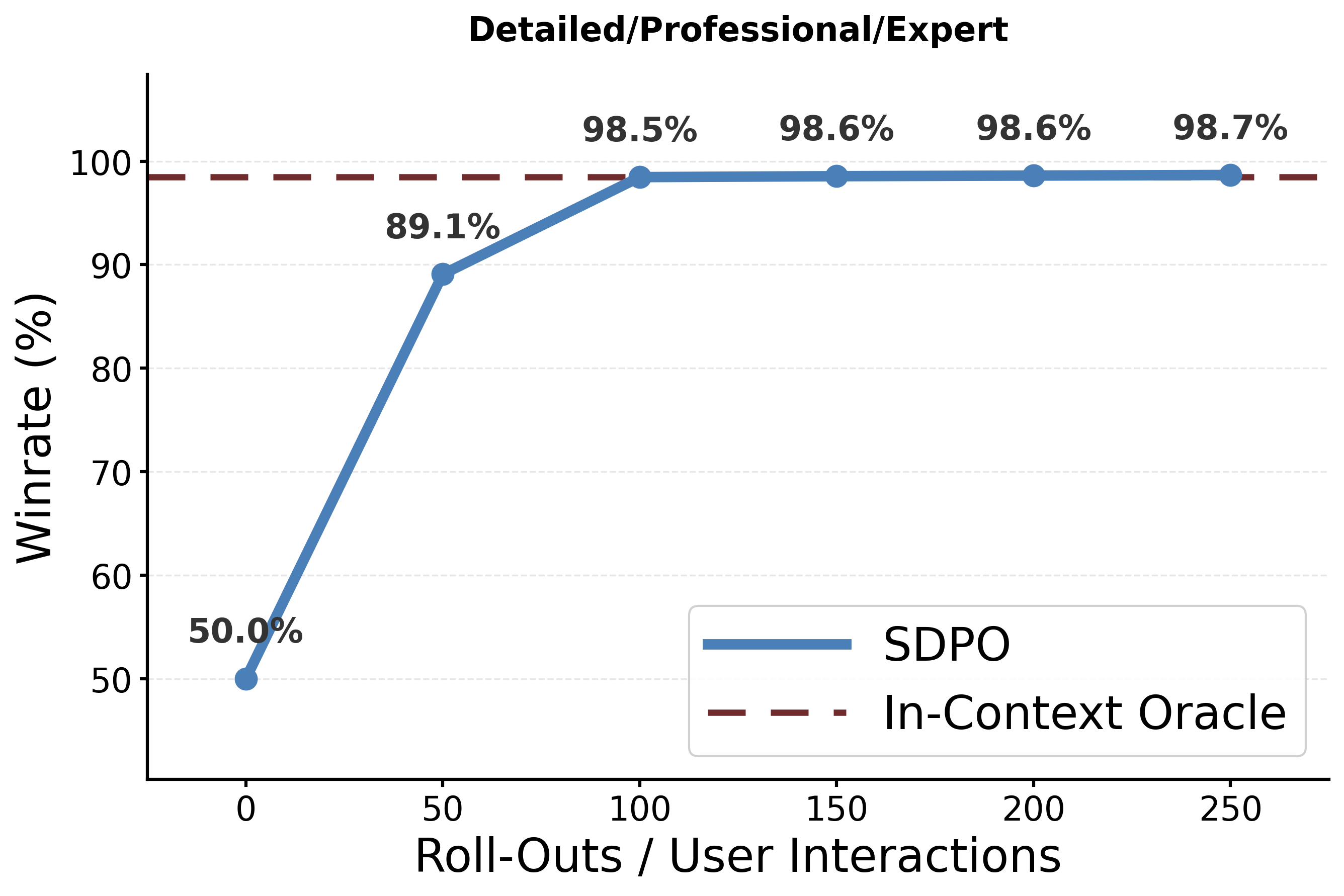}
    \end{subfigure}\hfill
    \begin{subfigure}[t]{0.32\textwidth}
        \centering
        \includegraphics[width=\linewidth]{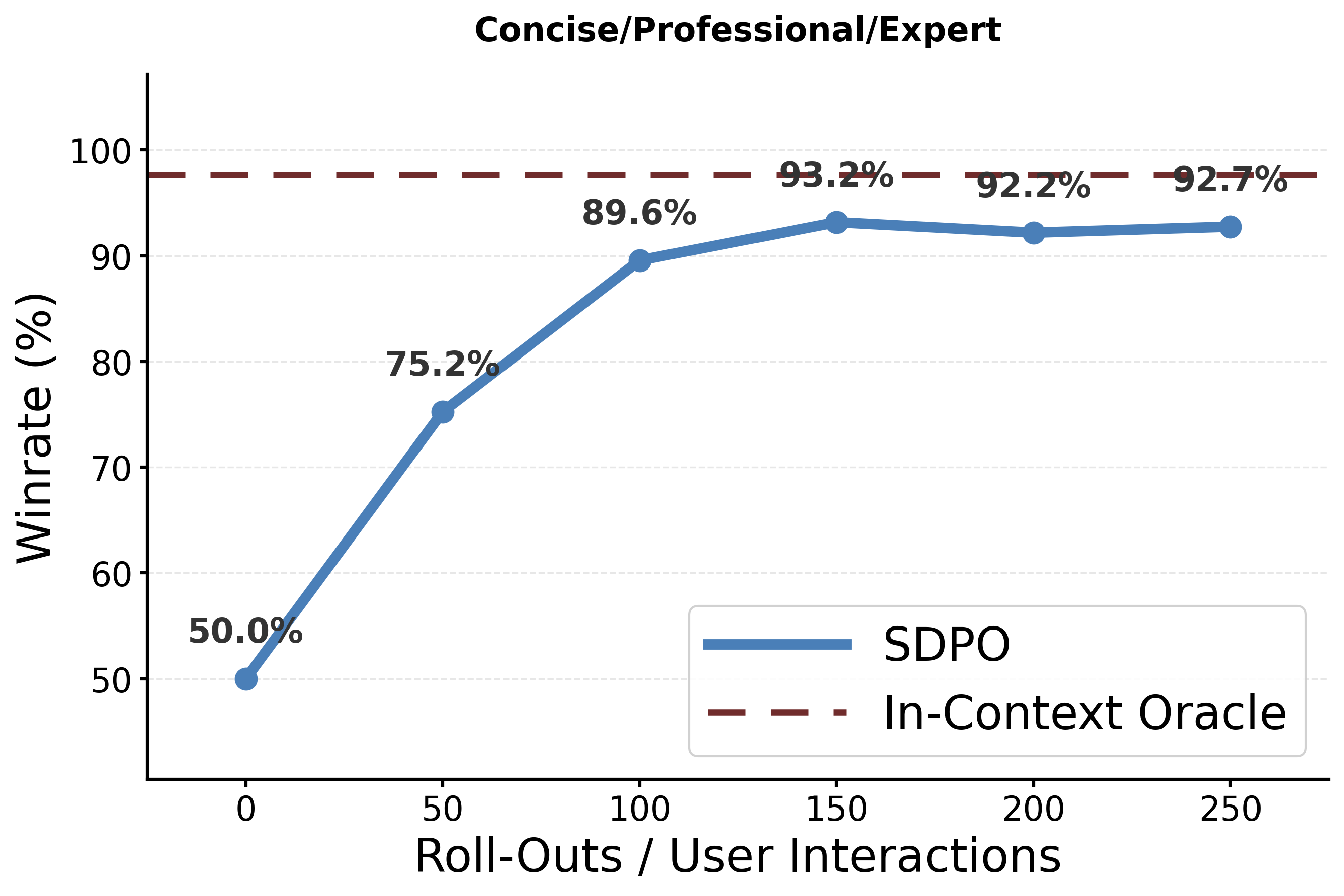}
    \end{subfigure}\hfill
    \begin{subfigure}[t]{0.32\textwidth}
        \centering
        \includegraphics[width=\linewidth]{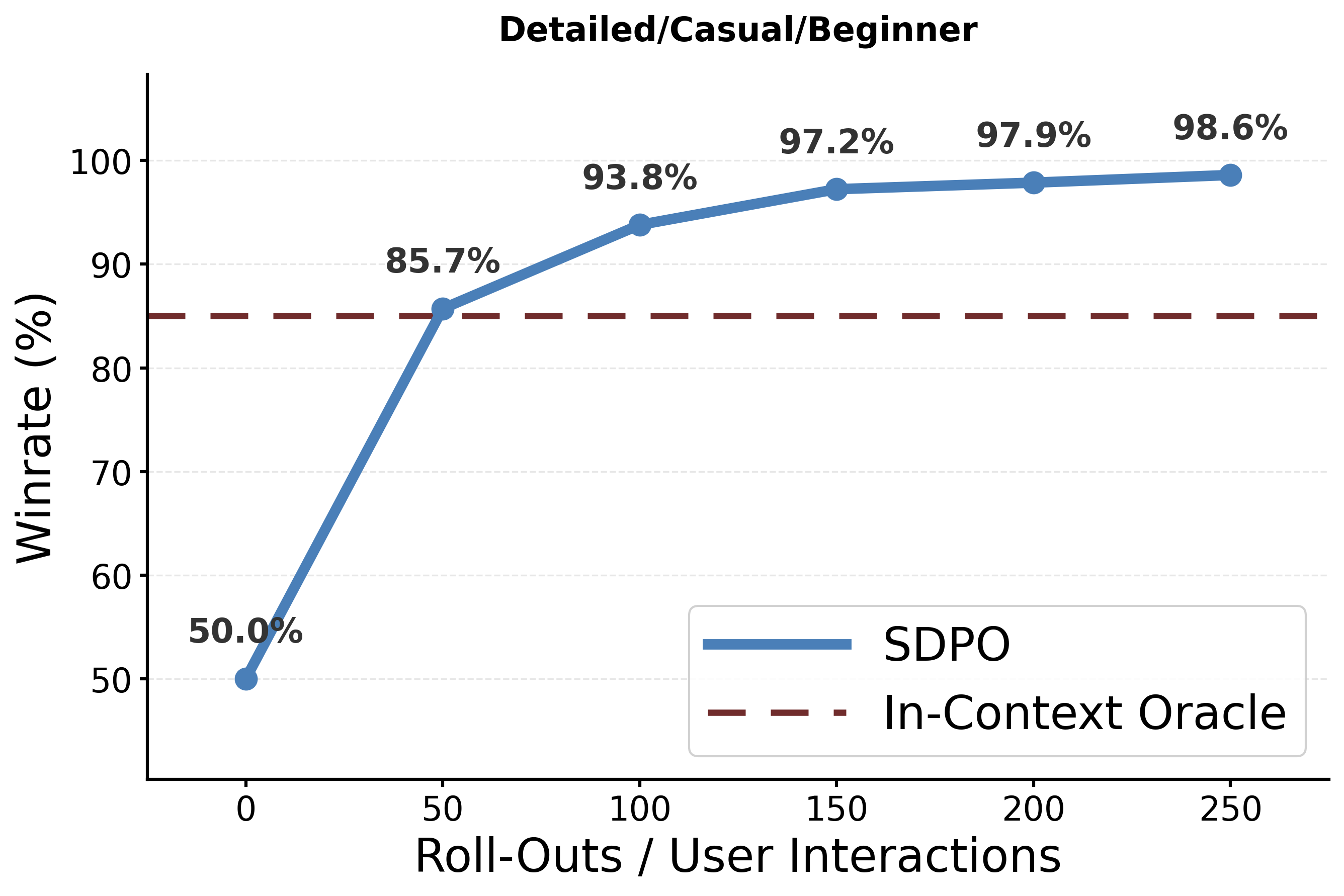}
    \end{subfigure}

    \vspace{0.5cm} 

    \begin{subfigure}[t]{0.32\textwidth}
        \centering
        \includegraphics[width=\linewidth]{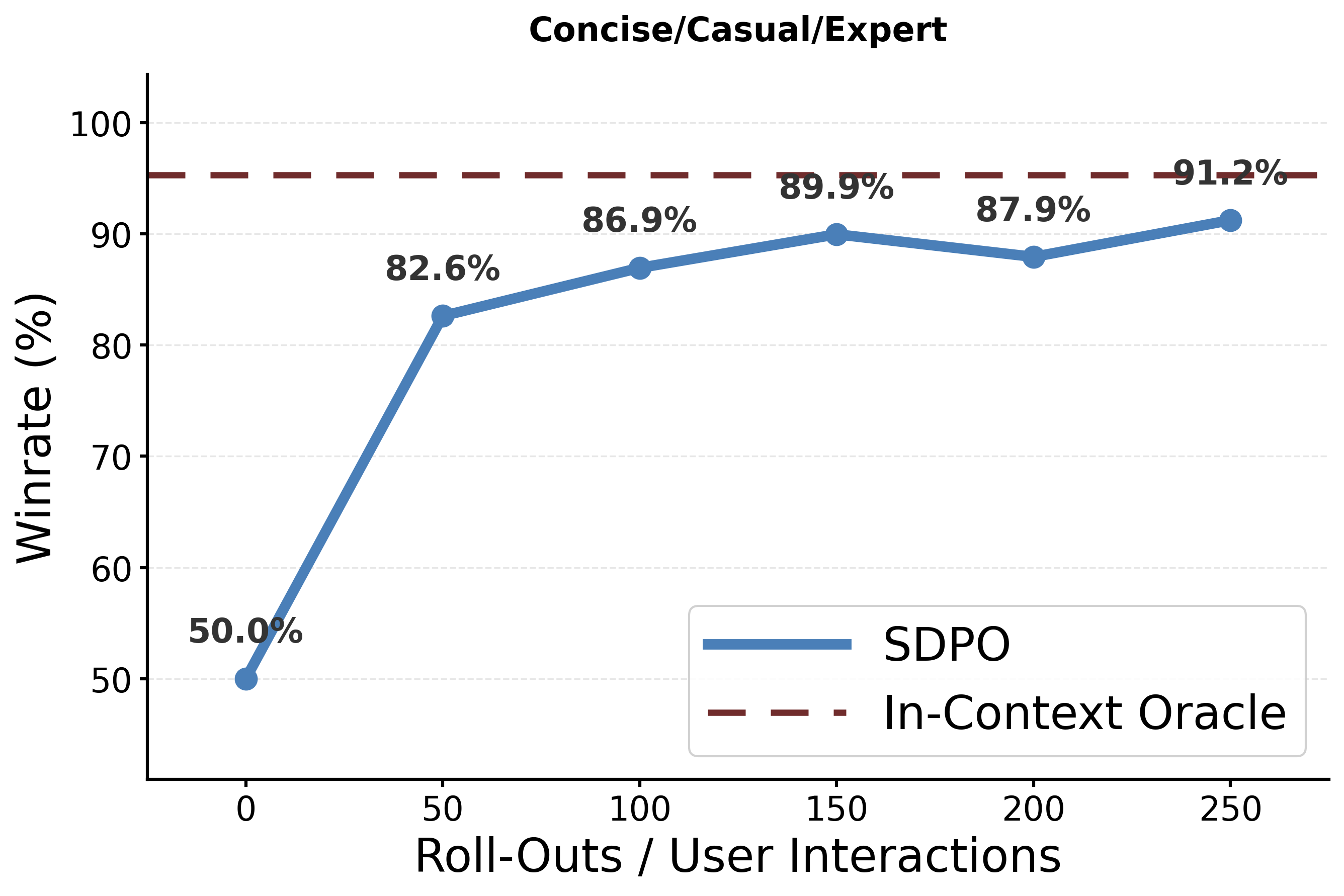}
    \end{subfigure}\hfill
    \begin{subfigure}[t]{0.32\textwidth}
        \centering
        \includegraphics[width=\linewidth]{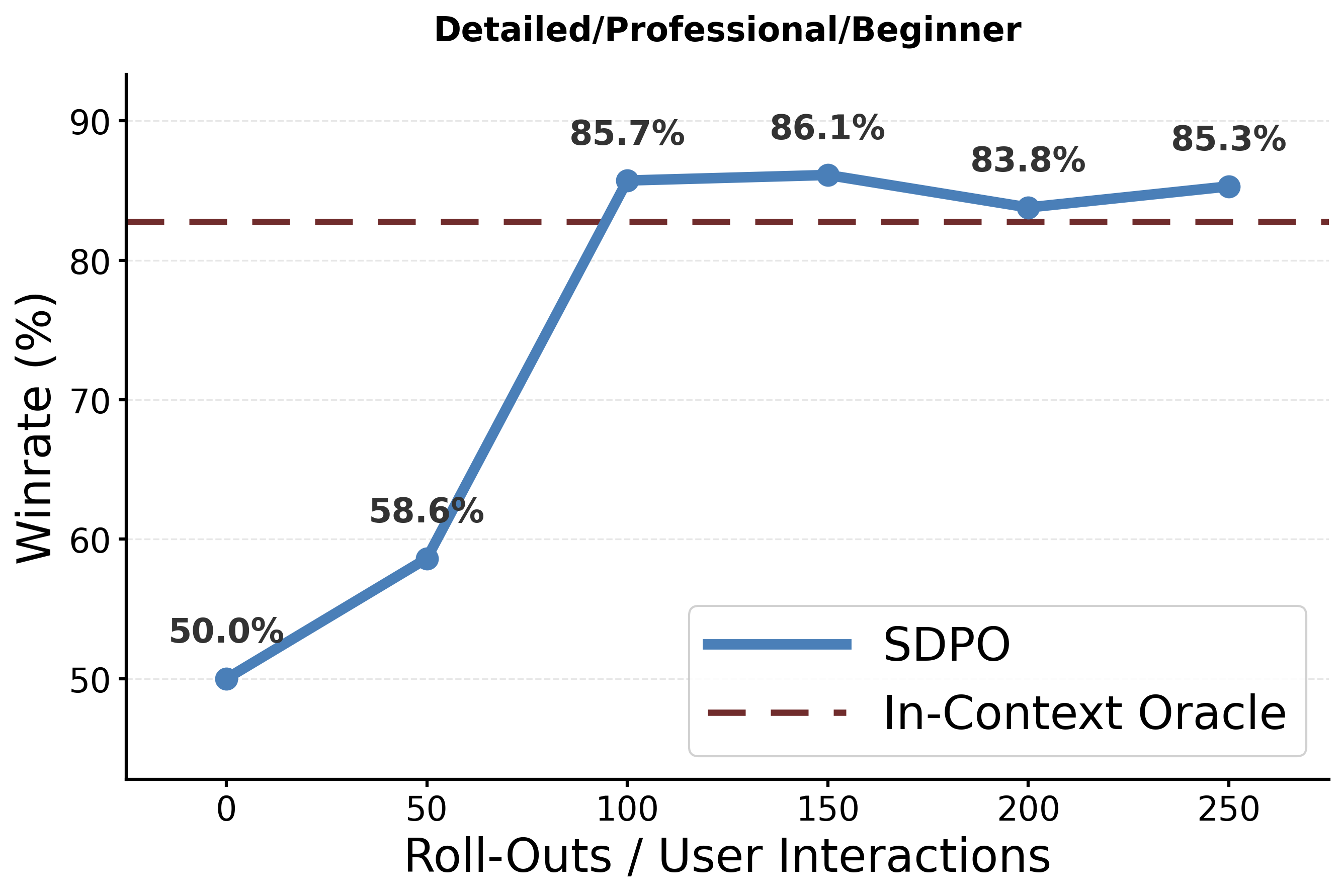}
    \end{subfigure}\hfill
    \begin{subfigure}[t]{0.32\textwidth}
        \centering
        \includegraphics[width=\linewidth]{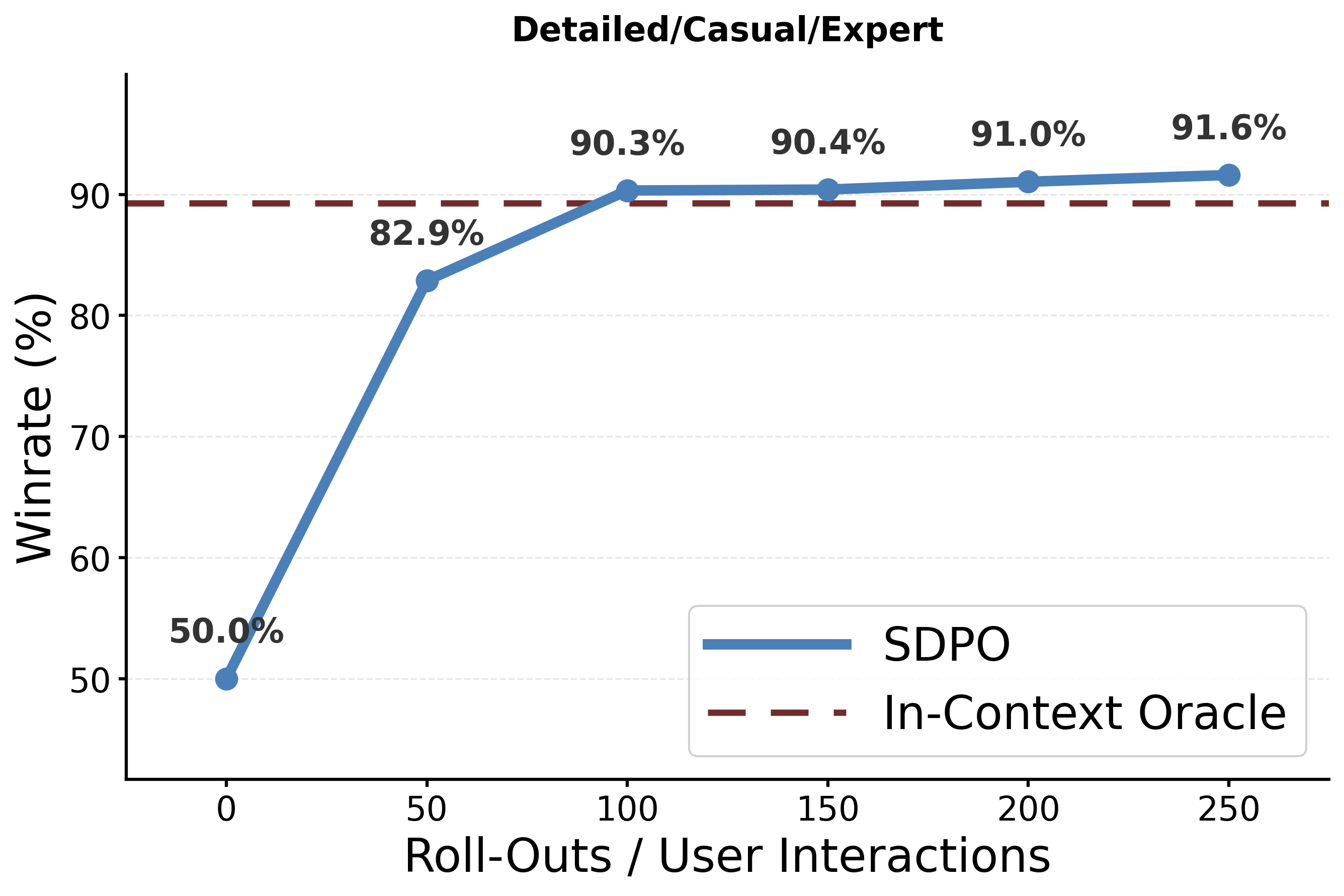}
    \end{subfigure}

    \caption{Additional personalization results from \cref{section:experiments_personalization} with Qwen3-4B. The win rate is computed against the base model and judged by Qwen3-8B. The In-Context Oracle baseline is obtained by prompting Qwen3-4B directly with the desired writing style.}
    \label{fig:additional_personas}
\end{figure*}

\subsection{User Profiles, Prompts, and Judging in \cref{section:experiments_personalization}}\label{appendix:user_profiles}

To generate user responses to the assistant's completions in~\cref{section:experiments_personalization}, we use the user profiles below as system prompts and then query the user model (Qwen3-8B or Claude Haiku 4.5) to generate a response with this persona. For the experiments in \cref{fig:complementary_preferences}, smaller models, such as Qwen3-8B and Qwen3-14B, became too unreliable to act as the user simulator and the judge for the prompts from HelpSteer2 and the more complex user profiles such as \emph{Less Filler Praise \& Sycophancy}. We therefore used Claude Haiku 4.5 instead.

For the evaluation of the win rate against the base model, we again add the personas to the system prompt and judge the outputs. Here, each pair of responses is judged twice with flipped positions to remove the position bias from the evaluation, and we evaluate the win rate on 256 held-out prompts in each of the experiments.

\tcbset{
  colback=black!2,
  colframe=black!85,
  coltext=black,
  boxrule=0.5pt,
  arc=2pt,
  left=6pt,right=6pt,top=4pt,bottom=4pt,
}

\newtcolorbox{personabox}{%
  enhanced,
  breakable,
  before skip=0pt,
  after skip=0pt,
  fontupper=\ttfamily\small
}

\newtcolorbox{promptbox}[2][]{%
  enhanced,
  breakable,
  before skip=0pt,
  after skip=0pt,
  title=\textbf{#2},
  coltitle=black,
  colbacktitle=black!5,
  fonttitle=\normalsize,
  fontupper=\ttfamily\small,
  #1
}

\subsubsection*{User Profiles} The user profiles used to simulate user responses in \cref{section:experiments_personalization}: 
\begin{description}[style=nextline,leftmargin=0pt] \item[\textbf{Don't Like Emojis and Icons}] \begin{tcolorbox}[boxrule=0.4pt,arc=2pt,left=6pt,right=6pt,top=4pt,bottom=4pt] \ttfamily USER PROFILE: You are playing the role of a user who dislikes emojis and icons in assistant responses. \end{tcolorbox} \vspace{-0.1cm} \item[\textbf{Less Filler Praise \& Sycophancy}] \begin{tcolorbox}[boxrule=0.4pt,arc=2pt,left=6pt,right=6pt,top=4pt,bottom=4pt] \ttfamily USER PROFILE: You are playing the role of a user that specifically dislikes when the assistant responses include filler praise at the beginning (such as 'Good question.' or 'Perfect!') or fillers at the end (such as I hope this helps!' or Let me know if there is anything else I can help you with.'). You strongly prefer when the assistant responses directly without unnecessary additions at the beginning or end. \end{tcolorbox} \vspace{-0.1cm} \item[\textbf{Answer Directly, Reduce Formatting}] \begin{tcolorbox}[boxrule=0.4pt,arc=2pt,left=6pt,right=6pt,top=4pt,bottom=4pt] \ttfamily USER PROFILE: You are playing the role of a user who prefers concise responses and dislikes long lists and excessive markdown formatting (such as ** ** and \#\#\#). You prefer plain text that is short and gets to the point quickly. \end{tcolorbox} \vspace{-0.1cm} \item[\textbf{Concise/Casual/Beginner}] \begin{tcolorbox}[boxrule=0.4pt,arc=2pt,left=6pt,right=6pt,top=4pt,bottom=4pt] \ttfamily USER PROFILE: You are playing the role of a user who specifically prefers CONCISE, CASUAL, and BEGINNER-FRIENDLY responses. You want brief context and clear explanations, avoiding long, formal, or technically dense answers. \end{tcolorbox} \vspace{-0.1cm} \item[\textbf{Detailed/Professional/Expert}] \begin{tcolorbox}[boxrule=0.4pt,arc=2pt,left=6pt,right=6pt,top=4pt,bottom=4pt] \ttfamily USER PROFILE: You are playing the role of a user who specifically prefers DETAILED, PROFESSIONAL, and EXPERT-LEVEL responses. You want a structured, impersonal, and analytical presentation with complex sentence structures and sophisticated wording. \end{tcolorbox} \end{description}

\vspace{0.1cm}

\subsubsection*{User Model Prompt and Judge Prompt} The prompts used to simulate user responses with Qwen3-8B and Claude Haiku 4.5. Further below, the prompt used to evaluate the reference completions from the base model against the trained models. 

\bigskip 
\begin{promptbox}{User Model Prompt}
[SYSTEM MESSAGE]\par
USER PROFILE: \textit{⟨USER\_PROFILE⟩}

You are simulating the user's next message in a chat with an AI assistant.

Rules:
- Respond as the user described in the USER PROFILE, using straightforward language.
- Evaluate ONLY the assistant’s response with respect to the preferences in the USER PROFILE.
- Do NOT answer the original user request yourself.
- Respond very briefly.
- Do NOT give feedback unrelated to the USER PROFILE.
- Output ONLY the user message text (no labels, no preface).
- If the assistant’s response matches the USER PROFILE well, you may say so briefly.

\vspace{0.2cm}
[USER MESSAGE]\par
Original user request:
\textit{⟨USER\_PROMPT⟩}

Assistant response:
\textit{⟨ASSISTANT\_RESPONSE⟩}

Write the user’s next message.
\end{promptbox}

\bigskip
\begin{promptbox}{Judge Prompt}
[SYSTEM MESSAGE]\par
USER PROFILE: \textit{⟨USER\_PROFILE⟩}

You are acting as a strict evaluator of WHICH response better matches your preference described in the USER PROFILE.
You must follow these rules:
- Judge ONLY style, tone, formatting, verbosity, and complexity relative to the persona.
- Do NOT judge factual correctness.
- Do NOT rewrite responses.
- Output exactly ONE character: A, B, or C.
- C means tie/uncertain.

\vspace{0.2cm}
[USER MESSAGE]\par
User prompt:
\textit{⟨USER\_PROMPT⟩}

Response A:
\textit{⟨RESPONSE\_A⟩}

Response B:
\textit{⟨RESPONSE\_B⟩}

Which response do you prefer as this user? Output only A, B, or C.
\end{promptbox}

%% file: ref.bib
@article{shi2024wildfeedback,
	title        = {Wildfeedback: Aligning llms with in-situ user interactions and feedback},
	author       = {Shi, Taiwei and Wang, Zhuoer and Yang, Longqi and Lin, Ying-Chun and He, Zexue and Wan, Mengting and Zhou, Pei and Jauhar, Sujay and Chen, Sihao and Xia, Shan and others},
	year         = 2024,
	journal      = {arXiv preprint arXiv:2408.15549}
}

@inproceedings{zhao2024wildchat,
	title        = {Wildchat: 1m chatgpt interaction logs in the wild},
	author       = {Zhao, Wenting and Ren, Xiang and Hessel, Jack and Cardie, Claire and Choi, Yejin and Deng, Yuntian},
	year         = 2024,
	booktitle    = {ICLR}
}

@article{chen2024learning,
	title        = {Learning from natural language feedback},
	author       = {Chen, Angelica and Scheurer, J{\'e}r{\'e}my and Campos, Jon Ander and Korbak, Tomasz and Chan, Jun Shern and Bowman, Samuel R and Cho, Kyunghyun and Perez, Ethan},
	year         = 2024,
	journal      = {TMLR}
}

@article{qwen3technicalreport,
	title        = {Qwen3 Technical Report},
	author       = {{Qwen\,Team}},
	year         = 2025,
	journal      = {arXiv preprint arXiv:2505.09388}
}

@article{luo2025languagemodelslearnverbal,
	title        = {Language Models Can Learn from Verbal Feedback Without Scalar Rewards},
	author       = {Renjie Luo and Zichen Liu and Xiangyan Liu and Chao Du and Min Lin and Wenhu Chen and Wei Lu and Tianyu Pang},
	year         = 2025,
	journal      = {arXiv preprint arXiv:2509.22638}
}

@inproceedings{yadkori2024believe,
	title        = {To Believe or Not to Believe Your LLM: Iterative Prompting for Estimating Epistemic Uncertainty},
	author       = {Yadkori, Yasin Abbasi and Kuzborskij, Ilja and Gy{\"o}rgy, Andr{\'a}s and Szepesv{\'a}ri, Csaba},
	year         = 2024,
	booktitle    = {NeurIPS}
}

@article{olmo2025olmo,
	title        = {Olmo 3},
	author       = {Olmo, Team and Ettinger, Allyson and Bertsch, Amanda and Kuehl, Bailey and Graham, David and Heineman, David and Groeneveld, Dirk and Brahman, Faeze and Timbers, Finbarr and Ivison, Hamish and others},
	year         = 2025,
	journal      = {arXiv preprint arXiv:2512.13961}
}

@inproceedings{dubois2024length,
	title        = {Length-controlled alpacaeval: A simple way to debias automatic evaluators},
	author       = {Dubois, Yann and Galambosi, Bal{\'a}zs and Liang, Percy and Hashimoto, Tatsunori B},
	year         = 2024,
	booktitle    = {COLM}
}

@inproceedings{li2024crowdsourced,
	title        = {From crowdsourced data to high-quality benchmarks: Arena-hard and benchbuilder pipeline},
	author       = {Li, Tianle and Chiang, Wei-Lin and Frick, Evan and Dunlap, Lisa and Wu, Tianhao and Zhu, Banghua and Gonzalez, Joseph E and Stoica, Ion},
	year         = 2025,
	booktitle    = {ICML}
}

@misc{arenahard2024,
	title        = {From live data to high-quality benchmarks: The arena-hard pipeline},
	author       = {Li, Tianle and Chiang, Wei-Lin and Frick, Evan and Dunlap, Lisa and Zhu, Banghua and Gonzalez, Joseph E and Stoica, Ion},
	year         = 2024,
	journal      = {Blog post.[Accessed 07-02-2025]},
	url          = {https://lmsys.org/blog/2024-04-19-arena-hard}
}

@article{zhou2023instruction,
	title        = {Instruction-following evaluation for large language models},
	author       = {Zhou, Jeffrey and Lu, Tianjian and Mishra, Swaroop and Brahma, Siddhartha and Basu, Sujoy and Luan, Yi and Zhou, Denny and Hou, Le},
	year         = 2023,
	journal      = {arXiv preprint arXiv:2311.07911}
}

@inproceedings{wang2024mmlu,
	title        = {Mmlu-pro: A more robust and challenging multi-task language understanding benchmark},
	author       = {Wang, Yubo and Ma, Xueguang and Zhang, Ge and Ni, Yuansheng and Chandra, Abhranil and Guo, Shiguang and Ren, Weiming and Arulraj, Aaran and He, Xuan and Jiang, Ziyan and others},
	year         = 2024,
	booktitle    = {NeurIPS}
}

@article{zhao2026self,
	title        = {Self-Distilled Reasoner: On-Policy Self-Distillation for Large Language Models},
	author       = {Zhao, Siyan and Xie, Zhihui and Liu, Mengchen and Huang, Jing and Pang, Guan and Chen, Feiyu and Grover, Aditya},
	year         = 2026,
	journal      = {arXiv preprint arXiv:2601.18734}
}

@article{shenfeld2026self,
	title        = {Self-Distillation Enables Continual Learning},
	author       = {Shenfeld, Idan and Damani, Mehul and H{\"u}botter, Jonas and Agrawal, Pulkit},
	year         = 2026,
	journal      = {arXiv preprint arXiv:2601.19897}
}

@article{hubotter2026reinforcement,
	title        = {Reinforcement Learning via Self-Distillation},
	author       = {H{\"u}botter, Jonas and L{\"u}beck, Frederike and Behric, Lejs and Baumann, Anton and Bagatella, Marco and Marta, Daniel and Hakimi, Ido and Shenfeld, Idan and Buening, Thomas Kleine and Guestrin, Carlos and others},
	year         = 2026,
	journal      = {arXiv preprint arXiv:2601.20802}
}

@article{song2026expanding,
	title        = {Expanding the Capabilities of Reinforcement Learning via Text Feedback},
	author       = {Song, Yuda and Chen, Lili and Tajwar, Fahim and Munos, Remi and Pathak, Deepak and Bagnell, J Andrew and Singh, Aarti and Zanette, Andrea},
	year         = 2026,
	journal      = {arXiv preprint arXiv:2602.02482}
}

@article{hinton2015distilling,
	title        = {Distilling the knowledge in a neural network},
	author       = {Hinton, Geoffrey and Vinyals, Oriol and Dean, Jeff},
	year         = 2015,
	journal      = {arXiv preprint arXiv:1503.02531}
}

@inproceedings{agarwal2024policy,
	title        = {On-policy distillation of language models: Learning from self-generated mistakes},
	author       = {Agarwal, Rishabh and Vieillard, Nino and Zhou, Yongchao and Stanczyk, Piotr and Garea, Sabela Ramos and Geist, Matthieu and Bachem, Olivier},
	year         = 2024,
	booktitle    = {ICLR}
}

@article{lu2025onpolicydistillation,
	title        = {On-Policy Distillation},
	author       = {Kevin Lu and {Thinking Machines Lab}},
	year         = 2025,
	journal      = {Thinking Machines Lab: Connectionism},
	url          = {https://thinkingmachines.ai/blog/on-policy-distillation}
}

@inproceedings{yang2024self,
	title        = {Self-distillation bridges distribution gap in language model fine-tuning},
	author       = {Yang, Zhaorui and Pang, Tianyu and Feng, Haozhe and Wang, Han and Chen, Wei and Zhu, Minfeng and Liu, Qian},
	year         = 2024,
	booktitle    = {ACL}
}

@inproceedings{yang2025distilling,
	title        = {Distilling rule-based knowledge into large language models},
	author       = {Yang, Wenkai and Lin, Yankai and Zhou, Jie and Wen, Ji-Rong},
	year         = 2025,
	booktitle    = {COLING}
}

@article{bai2022constitutional,
	title        = {Constitutional ai: Harmlessness from ai feedback},
	author       = {Bai, Yuntao and Kadavath, Saurav and Kundu, Sandipan and Askell, Amanda and Kernion, Jackson and Jones, Andy and Chen, Anna and Goldie, Anna and Mirhoseini, Azalia and McKinnon, Cameron and others},
	year         = 2022,
	journal      = {arXiv preprint arXiv:2212.08073}
}

@article{choi2022prompt,
	title        = {Prompt injection: Parameterization of fixed inputs},
	author       = {Choi, Eunbi and Jo, Yongrae and Jang, Joel and Seo, Minjoon},
	year         = 2022,
	journal      = {arXiv preprint arXiv:2206.11349}
}

@article{snell2022learning,
	title        = {Learning by distilling context},
	author       = {Snell, Charlie and Klein, Dan and Zhong, Ruiqi},
	year         = 2022,
	journal      = {arXiv preprint arXiv:2209.15189}
}

@inproceedings{eyuboglu2025cartridges,
	title        = {Cartridges: Lightweight and general-purpose long context representations via self-study},
	author       = {Eyuboglu, Sabri and Ehrlich, Ryan and Arora, Simran and Guha, Neel and Zinsley, Dylan and Liu, Emily and Tennien, Will and Rudra, Atri and Zou, James and Mirhoseini, Azalia and others},
	year         = 2026,
	booktitle    = {ICLR}
}

@article{kujanpaa2024knowledge,
	title        = {Efficient Knowledge Injection in {LLM}s via Self-Distillation},
	author       = {Kalle Kujanp{\"a}{\"a} and Pekka Marttinen and Harri Valpola and Alexander Ilin},
	year         = 2025,
	journal      = {TMLR}
}

@inproceedings{cao2025infiniteicl,
	title        = {InfiniteICL: Breaking the Limit of Context Window Size via Long Short-term Memory Transformation},
	author       = {Cao, Bowen and Cai, Deng and Lam, Wai},
	year         = 2025,
	booktitle    = {ACL}
}

@article{scheurer2023training,
	title        = {Training language models with language feedback at scale},
	author       = {Scheurer, J{\'e}r{\'e}my and Campos, Jon Ander and Korbak, Tomasz and Chan, Jun Shern and Chen, Angelica and Cho, Kyunghyun and Perez, Ethan},
	year         = 2023,
	journal      = {arXiv preprint arXiv:2303.16755}
}

@inproceedings{dou2024re,
	title        = {Re-rest: Reflection-reinforced self-training for language agents},
	author       = {Dou, Zi-Yi and Yang, Cheng-Fu and Wu, Xueqing and Chang, Kai-Wei and Peng, Nanyun},
	year         = 2024,
	booktitle    = {EMNLP}
}

@inproceedings{zhou2025expo,
	title        = {ExPO: Unlocking Hard Reasoning with Self-Explanation-Guided Reinforcement Learning},
	author       = {Zhou, Ruiyang and Li, Shuozhe and Zhang, Amy and Leqi, Liu},
	year         = 2025,
	booktitle    = {NeurIPS}
}

@article{mitra2025semantic,
	title        = {Semantic Soft Bootstrapping: Long Context Reasoning in LLMs without Reinforcement Learning},
	author       = {Mitra, Purbesh and Ulukus, Sennur},
	year         = 2025,
	journal      = {arXiv preprint arXiv:2512.05105}
}

@article{qu2026pope,
	title        = {POPE: Learning to Reason on Hard Problems via Privileged On-Policy Exploration},
	author       = {Qu, Yuxiao and Setlur, Amrith and Smith, Virginia and Salakhutdinov, Ruslan and Kumar, Aviral},
	year         = 2026,
	journal      = {arXiv preprint arXiv:2601.18779}
}

@inproceedings{chen2025retrospective,
	title        = {Retrospective In-Context Learning for Temporal Credit Assignment with Large Language Models},
	author       = {Chen, Wentse and Chen, Jiayu and Tajwar, Fahim and Zhu, Hao and Duan, Xintong and Salakhutdinov, Ruslan and Schneider, Jeff},
	year         = 2025,
	booktitle    = {NeurIPS}
}

@article{penaloza2026privileged,
	title        = {Privileged Information Distillation for Language Models},
	author       = {Penaloza, Emiliano and Vattikonda, Dheeraj and Gontier, Nicolas and Lacoste, Alexandre and Charlin, Laurent and Caccia, Massimo},
	year         = 2026,
	journal      = {arXiv preprint arXiv:2602.04942}
}

@inproceedings{ouyang2022training,
	title        = {Training language models to follow instructions with human feedback},
	author       = {Ouyang, Long and Wu, Jeffrey and Jiang, Xu and Almeida, Diogo and Wainwright, Carroll and Mishkin, Pamela and Zhang, Chong and Agarwal, Sandhini and Slama, Katarina and Ray, Alex and others},
	year         = 2022,
	booktitle    = {NeurIPS}
}

@inproceedings{brown2020language,
	title        = {Language models are few-shot learners},
	author       = {Brown, Tom and Mann, Benjamin and Ryder, Nick and Subbiah, Melanie and Kaplan, Jared D and Dhariwal, Prafulla and Neelakantan, Arvind and Shyam, Pranav and Sastry, Girish and Askell, Amanda and others},
	year         = 2020,
	booktitle    = {NeurIPS}
}

@inproceedings{wei2022chain,
	title        = {Chain-of-thought prompting elicits reasoning in large language models},
	author       = {Wei, Jason and Wang, Xuezhi and Schuurmans, Dale and Bosma, Maarten and Xia, Fei and Chi, Ed and Le, Quoc V and Zhou, Denny and others},
	year         = 2022,
	booktitle    = {NeurIPS}
}

@article{chung2024scaling,
	title        = {Scaling instruction-finetuned language models},
	author       = {Chung, Hyung Won and Hou, Le and Longpre, Shayne and Zoph, Barret and Tay, Yi and Fedus, William and Li, Yunxuan and Wang, Xuezhi and Dehghani, Mostafa and Brahma, Siddhartha and others},
	year         = 2024,
	journal      = {JMLR},
	volume       = 25,
	number       = 70,
	pages        = {1--53}
}

@inproceedings{lin2022truthfulqa,
	title        = {Truthfulqa: Measuring how models mimic human falsehoods},
	author       = {Lin, Stephanie and Hilton, Jacob and Evans, Owain},
	year         = 2022,
	booktitle    = {ACL}
}

@inproceedings{zellers2019hellaswag,
	title        = {HellaSwag: Can a Machine Really Finish Your Sentence?},
	author       = {Zellers, Rowan and Holtzman, Ari and Bisk, Yonatan and Farhadi, Ali and Choi, Yejin},
	year         = 2019,
	booktitle    = {ACL}
}

@inproceedings{talmor2019commonsenseqa,
	title        = {Commonsenseqa: A question answering challenge targeting commonsense knowledge},
	author       = {Talmor, Alon and Herzig, Jonathan and Lourie, Nicholas and Berant, Jonathan},
	year         = 2019,
	booktitle    = {NAACL}
}

@inproceedings{wang2024helpsteer,
	title        = {Helpsteer 2: Open-source dataset for training top-performing reward models},
	author       = {Wang, Zhilin and Dong, Yi and Delalleau, Olivier and Zeng, Jiaqi and Shen, Gerald and Egert, Daniel and Zhang, Jimmy and Sreedhar, Makesh Narsimhan and Kuchaiev, Oleksii},
	year         = 2024,
	booktitle    = {NeurIPS}
}

@inproceedings{stienon2020learning,
	title        = {Learning to summarize from human feedback},
	author       = {Nisan Stiennon and Long Ouyang and Jeff Wu and Daniel M. Ziegler and Ryan Lowe and Chelsea Voss and Alec Radford and Dario Amodei and Paul Christiano},
	year         = 2020,
	booktitle    = {NeurIPS}
}

@misc{ica2026,
	title        = {Intrinsic Credit Assignment for Long Horizon Interaction},
	author       = {Auzina, Ilze Amanda and Strüber, Joschka and Hernández-Gutiérrez, Sergio and Goel, Shashwat and Prabhu, Ameya and Bethge, Matthias},
	year         = 2026,
	journal      = {arXiv preprint arXiv:2602.12342}
}

@article{don2025future,
	title        = {The future of open human feedback},
	author       = {Don-Yehiya, Shachar and Burtenshaw, Ben and Fernandez Astudillo, Ramon and Osborne, Cailean and Jaiswal, Mimansa and Kuo, Tzu-Sheng and Zhao, Wenting and Shenfeld, Idan and Peng, Andi and Yurochkin, Mikhail and others},
	year         = 2025,
	journal      = {Nature Machine Intelligence},
	volume       = 7,
	number       = 6,
	pages        = {825--835}
}

@inproceedings{stephan2024rlvf,
	title        = {Rlvf: Learning from verbal feedback without overgeneralization},
	author       = {Stephan, Moritz and Khazatsky, Alexander and Mitchell, Eric and Chen, Annie S and Hsu, Sheryl and Sharma, Archit and Finn, Chelsea},
	year         = 2024,
	booktitle    = {ICML}
}

@article{lee2024reinforcement,
	title        = {Reinforcement learning from reflective feedback (rlrf): Aligning and improving llms via fine-grained self-reflection},
	author       = {Lee, Kyungjae and Hwang, Dasol and Park, Sunghyun and Jang, Youngsoo and Lee, Moontae},
	year         = 2024,
	journal      = {arXiv preprint arXiv:2403.14238}
}

@inproceedings{madaan2023self,
	title        = {Self-refine: Iterative refinement with self-feedback},
	author       = {Madaan, Aman and Tandon, Niket and Gupta, Prakhar and Hallinan, Skyler and Gao, Luyu and Wiegreffe, Sarah and Alon, Uri and Dziri, Nouha and Prabhumoye, Shrimai and Yang, Yiming and others},
	year         = 2023,
	booktitle    = {NeurIPS}
}

@inproceedings{shinn2023reflexion,
	title        = {Reflexion: Language agents with verbal reinforcement learning},
	author       = {Shinn, Noah and Cassano, Federico and Gopinath, Ashwin and Narasimhan, Karthik and Yao, Shunyu},
	year         = 2023,
	booktitle    = {NeurIPS}
}

@inproceedings{yao2023retroformer,
	title        = {Retroformer: Retrospective large language agents with policy gradient optimization},
	author       = {Yao, Weiran and Heinecke, Shelby and Niebles, Juan Carlos and Liu, Zhiwei and Feng, Yihao and Xue, Le and Murthy, Rithesh and Chen, Zeyuan and Zhang, Jianguo and Arpit, Devansh and others},
	year         = 2024,
	booktitle    = {ICLR}
}

@article{yuksekgonul2025optimizing,
	title        = {Optimizing generative AI by backpropagating language model feedback},
	author       = {Yuksekgonul, Mert and Bianchi, Federico and Boen, Joseph and Liu, Sheng and Lu, Pan and Huang, Zhi and Guestrin, Carlos and Zou, James},
	year         = 2025,
	journal      = {Nature},
	volume       = 639,
	pages        = {609--616}
}

@inproceedings{goyal2019using,
	title        = {Using natural language for reward shaping in reinforcement learning},
	author       = {Goyal, Prasoon and Niekum, Scott and Mooney, Raymond J},
	year         = 2019,
	booktitle    = {IJCAI}
}

@inproceedings{xie2024text2reward,
	title        = {Text2reward: Reward shaping with language models for reinforcement learning},
	author       = {Xie, Tianbao and Zhao, Siheng and Wu, Chen Henry and Liu, Yitao and Luo, Qian and Zhong, Victor and Yang, Yanchao and Yu, Tao},
	year         = 2024,
	booktitle    = {ICLR}
}

@inproceedings{urcelay2025words,
	title        = {From Words to Rewards: Leveraging Natural Language for Reinforcement Learning},
	author       = {Urcelay, Belen Martin and Krause, Andreas and Ramponi, Giorgia},
	year         = 2026,
	booktitle    = {TMLR}
}

@inproceedings{wang2025text2grad,
	title        = {Text2Grad: Reinforcement Learning from Natural Language Feedback},
	author       = {Wang, Hanyang and Wang, Lu and Zhang, Chaoyun and Mao, Tianjun and Qin, Si and Lin, Qingwei and Rajmohan, Saravan and Zhang, Dongmei},
	year         = 2026,
	booktitle    = {ICLR}
}

@article{gunjal2025rubrics,
	title        = {Rubrics as rewards: Reinforcement learning beyond verifiable domains},
	author       = {Gunjal, Anisha and Wang, Anthony and Lau, Elaine and Nath, Vaskar and He, Yunzhong and Liu, Bing and Hendryx, Sean},
	year         = 2025,
	journal      = {arXiv preprint arXiv:2507.17746}
}

@article{team2025kimi,
	title        = {Kimi k2: Open agentic intelligence},
	author       = {{Kimi Team} and Bai, Yifan and Bao, Yiping and Chen, Guanduo and Chen, Jiahao and Chen, Ningxin and Chen, Ruijue and Chen, Yanru and Chen, Yuankun and Chen, Yutian and others},
	year         = 2025,
	journal      = {arXiv preprint arXiv:2507.20534}
}

@inproceedings{rafailov2023direct,
	title        = {Direct preference optimization: Your language model is secretly a reward model},
	author       = {Rafailov, Rafael and Sharma, Archit and Mitchell, Eric and Manning, Christopher D and Ermon, Stefano and Finn, Chelsea},
	year         = 2023,
	booktitle    = {NeurIPS}
}

@article{don2024naturally,
	title        = {Naturally occurring feedback is common, extractable and useful},
	author       = {Don-Yehiya, Shachar and Choshen, Leshem and Abend, Omri},
	year         = 2024,
	journal      = {arXiv preprint arXiv:2407.10944}
}

@article{shi2026experiential,
	title        = {Experiential Reinforcement Learning},
	author       = {Shi, Taiwei and Chen, Sihao and Jiang, Bowen and Song, Linxin and Yang, Longqi and Zhao, Jieyu},
	year         = 2026,
	journal      = {arXiv preprint arXiv:2602.13949}
}

@article{shao2025dr,
	title        = {Dr tulu: Reinforcement learning with evolving rubrics for deep research},
	author       = {Shao, Rulin and Asai, Akari and Shen, Shannon Zejiang and Ivison, Hamish and Kishore, Varsha and Zhuo, Jingming and Zhao, Xinran and Park, Molly and Finlayson, Samuel G and Sontag, David and others},
	year         = 2025,
	journal      = {arXiv preprint arXiv:2511.19399}
}
